\newcommand{\EdgeSet}{\vec{\bm{\mathcal{E}}}}
\newcommand{\VertSetP}{\bm{\mathcal{V}}_p}
\newcommand{\VertSetM}{\bm{\mathcal{V}}_m}
\newcommand{\VertSet}{\bm{\mathcal{V}}}
\newcommand{\Graph}{\vec{\bm{\mathcal{G}}}}
\newcommand{\vect}[1]{\mbox{vec}(#1)}
\newcommand{\inner}[2]{\left<#1,#2\right>}
\newcommand{\tr}[1]{\mbox{tr}\left(#1\right)}
\newcommand{\rank}[1]{\mbox{rank}\left(#1\right)}
\newcommand{\diag}[1]{\mbox{diag}\left(#1\right)}
\newcommand{\indset}[1]{\left[#1\right]}
\newtheorem{theorem}{Theorem}
\newtheorem{lemma}[theorem]{Lemma}
\newtheorem{definition}[theorem]{Definition}
\newtheorem{proposition}[theorem]{Proposition}
\newtheorem{remark}{Remark}
\newcommand{\rev}[1]{{\color{black}{#1}}}
\begin{document}

\title{On Semidefinite Relaxations for Matrix-Weighted State-Estimation Problems in Robotics}

\author{Connor Holmes~\IEEEmembership{Student Member, IEEE}, Frederike D{\"u}mbgen~\IEEEmembership{Member, IEEE}, Timothy D. Barfoot~\IEEEmembership{Fellow, IEEE}\vspace*{-0.45in}
\thanks{This work was supported in part by the National Sciences and Engineering Research Council of Canada (NSERC) and in part by Swiss National Science Foundation, Postdoc Mobility under Grant 206954.}
\thanks{CH and TDB are with the University of Toronto Robotics Institute, University of Toronto, Toronto, Ontario, Canada. FD is with Inria, École Normale Supérieure, PSL University, Paris, France. Corresponding author: {\tt\footnotesize connor.holmes@mail.utoronto.ca}}
\thanks{Manuscript received: August 14, 2023; Revised: May 1, 2024; Accepted: September 22, 2024.}
\thanks{This paper was recommended for publication by Editor Behnke, Sven upon evaluation of the Associate Editor and Reviewers’ comments.}
\thanks{Digital Object Identifier (DOI): see top of this page.}
}
\markboth{Revision, September~2024}%
{Shell \MakeLowercase{\textit{et al.}}: A Sample Article Using IEEEtran.cls for IEEE Journals}



\maketitle

\begin{abstract}
In recent years, there has been remarkable progress in the development of so-called \emph{certifiable perception} methods, which leverage semidefinite, convex relaxations to find \emph{global optima} of perception problems in robotics. However, many of these relaxations rely on simplifying assumptions that facilitate the problem formulation, such as an \emph{isotropic} measurement noise distribution.
In this paper, we explore the tightness of the semidefinite relaxations of \emph{matrix-weighted} (anisotropic) state-estimation problems and reveal the limitations lurking therein: matrix-weighted factors can cause convex relaxations to lose tightness. In particular, we show that the semidefinite relaxations of localization problems with matrix weights may be tight only for low noise levels. 
To better understand this issue, we introduce a theoretical connection between the posterior uncertainty of the state estimate and \rev{the certificate matrix obtained via} convex relaxation.  
With this connection in mind, we empirically explore the factors that contribute to this loss of tightness and demonstrate that \emph{redundant constraints} can be used to regain it.
As a second technical contribution of this paper, we show that the state-of-the-art relaxation of scalar-weighted SLAM cannot be used when matrix weights are considered. We provide an alternate formulation and show that its SDP relaxation is not tight (even for very low noise levels) unless specific \emph{redundant constraints} are used. 
We demonstrate the tightness of our formulations on both simulated and real-world data.

\end{abstract}

\begin{IEEEkeywords}
	Localization, SLAM, Anisotropic, Certifiable, Optimization.
\end{IEEEkeywords}

\section{Introduction}
\begin{figure}[!t]
	\centering
	\vspace*{-0.05in}
	\includegraphics[width=\columnwidth]{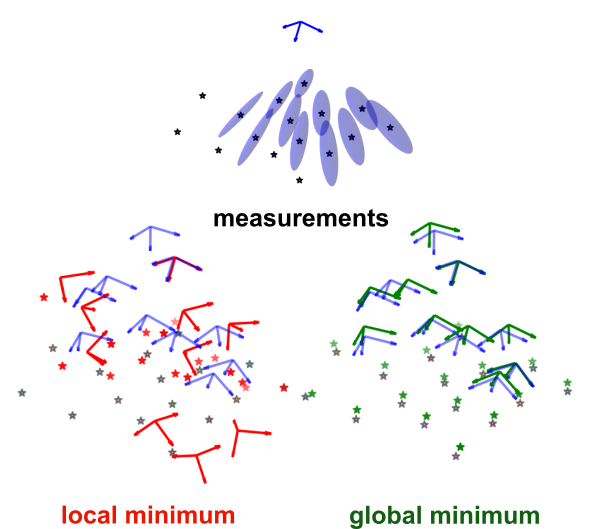}
	\vspace*{-0.3in}
	\caption{An example of a local and global minimum for stereo SLAM with 10 poses and 20 landmarks from the ``Starry Night'' dataset. Ground-truth landmarks are shown as grey stars, ground-truth poses as blue frames, pose/landmark estimates are shown as frames/stars coloured red and green for local and global minima, respectively. 
	Both minima are based on the same set of measurements: stereo pixel coordinates converted to Euclidean coordinates (example shown at the top of the figure) and relative-pose measurements.
	}
	\vspace*{-0.22in}
	\label{fig:slam_local_min}
\end{figure}

\IEEEPARstart{S}{tate-estimation} is an integral component of modern robotics systems. Workhorse algorithms for state-estimation -- such as localization and simultaneous localization and mapping (SLAM) -- are now capable of estimating hundreds of thousands of states on a single processor in real time~\cite{rosenAdvancesInferenceRepresentation2021} and are far from the computational bottleneck of robotic systems. To obtain such levels of performance, these algorithms typically rely on local optimization methods (e.g., Gauss-Newton), which often exhibit super-linear convergence. In particular, SLAM has reached a high level of maturity in terms of both breadth and depth of understanding in the robotics community (see~\cite{baileySimultaneousLocalizationMapping2006} and~\cite{durrant-whyteSimultaneousLocalizationMapping2006} for a comprehensive review of SLAM).  

In recent years, we have seen a surge in interest in the use of convex, semidefinite relaxations to solve and certify the global optimality of robotic state-estimation and perception problems. 
In principle, these relaxations can be solved in polynomial time using interior-point methods\cite{vandenbergheSemidefiniteProgramming1996}. However, much of the interest in these algorithms is related to recent improvements in runtime, which have made certifiable methods more attractive for \emph{real-time} applications. These improvements have largely been brought to the robotics community by SE-Sync~\cite{rosenSESyncCertifiablyCorrect2019}, which globally solves \emph{pose-graph optimization} (PGO) -- a cornerstone of modern SLAM algorithms -- by leveraging the low-rank nature of its semidefinite program (SDP) relaxation. A series of extensions to this method have been and continue to be developed~\cite{rosenAdvancesInferenceRepresentation2021}.

Despite the success of these methods, they often rely on simplifying assumptions in order to apply these fast, low-rank algorithms. Perhaps most striking is the assumption of isotropic noise models that pervades the majority of the literature. Isotropic noise is often an unrealistic assumption for modern robotics, especially when more realistic sensor models are considered. For example,~\cite{matthiesErrorModelingStereo1987} shows that conversion of stereo pixel coordinates to Euclidean coordinates yields noise distributions that should not be modeled isotropically. When the correct model is used, the resulting maximum-likelihood optimization includes \emph{matrix} (rather than \emph{scalar}) weighting factors. 

The introduction of matrix weighting in state-estimation problems with rotations typically leads to solution methods that are more involved. For example, though a closed-form, global solution exists for scalar-weighted Wahba's problem~\cite{hornClosedformSolutionAbsolute1987, hornClosedFormSolutionAbsolute1988, wahbaLeastSquaresEstimate1965},\footnote{Wahba's problem can also be referred to as point-set registration.} when matrix weights are introduced, an iterative, local solver must be used~\cite{chengTotalLeastSquaresEstimate2019, barfootStateEstimationRobotics2017}. 

We will see that introduction of matrix weighting can also have a profound effect on the convex relaxations of state-estimation problems. In many cases, problems that have tight relaxations in the scalar-weighted case have a duality gap when matrix weighting is used. These relaxations can be \emph{tightened} by adding new constraints to the SDP, but the addition of these constraints may degrade the performance of  low-rank optimization methods mentioned above. Therefore, it is paramount to understand the key causes of this loss of tightness.

In this paper, we explore the tightness of semidefinite relaxations of perception problems that have been generalized to include \emph{matrix weights} and expose the limitations that result from this generalization. In the next section, we introduce works that are closely related to this paper and establish our contributions to the field.  We then introduce the requisite background on measurement models and semidefinite relaxation methods in Section~\ref{sec:Background}. 
In Section~\ref{sec:Formulations}, we explore the impact of matrix weighting on the formulation of two key state-estimation problems in robotics: localization and SLAM.
We then draw an interesting theoretical connection in Section~\ref{sec:Uncertainty} between the posterior distribution of the state estimate and dual solution (or \emph{certificate matrix}) of the semidefinite relaxation of the corresponding state estimation problem. 
In Section~\ref{sec:Simulations}, we provide an in-depth, empirical study of the effects of matrix weighting, anisotropic noise, and stereo-camera measurements on the tightness of the semidefinite relaxations defined in  Section~\ref{sec:Formulations}, drawing connections to Section~\ref{sec:Uncertainty}. We also evaluate these relaxations on real-world datasets, both with and without redundant constraints, in Section~\ref{sec:RealExp}. In particular, we show how our globally optimal solution to matrix-weighted Wahba's Problem can be used in a state-of-the-art, outdoor, stereo-localization pipeline.  Finally, Section~\ref{sec:Conclusions} presents our conclusions and ideas for future research in this area.

\section{Related Work and Contributions}\label{sec:RelatedContrib}

There is a large range of problems for which certifiable methods already exist. To name a few, methods for robust state estimation~\cite{yangCertifiablyOptimalOutlierRobust2023,yangTEASERFastCertifiable2021}, sensor calibration~\cite{wiseCertifiablyOptimalMonocular2020,giamouConvexIterationDistanceGeometric2022}, inverse kinematics~\cite{giamouConvexIterationDistanceGeometric2022}, image segmentation~\cite{huAcceleratedInferenceMarkov2019}, pose-graph optimization~\cite{rosenSESyncCertifiablyCorrect2019}, multiple-point-set registration~\cite{chaudhuryGlobalRegistrationMultiple2015, iglesiasGlobalOptimalityPoint2020}, range-only localization~\cite{dumbgenSafeSmoothCertified2023}, planar SLAM~\cite{liuConvexOptimizationBased2012}, and range-aided SLAM~\cite{papaliaCertifiablyCorrectRangeAided2023} have all been explored. 

Many papers have considered the conditions under which a given problem \emph{can} be certified. In particular,~\cite{cifuentesLocalStabilitySemidefinite2022} shows that, under certain technical conditions, problems that have zero duality gap when unperturbed (no noise) continue to enjoy zero duality gap as long as the perturbation parameter is within a bound (i.e., the underlying problem has sufficiently low noise). For state-estimation problems, this bound is often (but not always) found empirically to be larger than noise levels encountered in practice~\cite{rosenSESyncCertifiablyCorrect2019, tianDistributedCertifiablyCorrect2021, erikssonRotationAveragingStrong2018}.

At present, certifiable perception problems can be broadly catagorized into two key groups: problems for which fast, low-rank solvers are available and problems that can be certified, but must still rely on slower SDP solution methods (e.g., interior-point methods). The next subsections provide more detail on each of these two groups.

\subsection{Fast Certifiable Perception}\label{sec:FastPerception}

More so than other problems, global optimization of \emph{rotation synchronization} has been the subject of intense study in the vision community~\cite{wilsonWhenRotationsAveraging2016, erikssonRotationAveragingStrong2018, brynteTightnessSemidefiniteRelaxations2022} and was one of the first to enjoy significant speed improvements by leveraging the \emph{low-rank} structure of the SDP relaxation via the so-called \emph{Riemannian Staircase} approach\cite{bandeiraTightnessMaximumLikelihood2017, boumalNonconvexBurerMonteiro2016}. 

Building off existing certification methods for PGO~\cite{carloneLagrangianDuality3D2015} and inspired by the success of Riemannian methods for rotation-synchronization,~\cite{rosenSESyncCertifiablyCorrect2019} introduced SE-Sync in the robotics community. SE-Sync solves PGO over $\mbox{SE}(d)$ by taking advantage of its \emph{separable} structure~\cite{khosoussiSparseSeparableSLAM2016}, marginalizing the translation variables, and using the \emph{Riemannian Staircase} to solve the resulting rotation-synchronization problem. It was later shown that this technique could be used without marginalizing translations~\cite{brialesCartanSyncFastGlobal2017} and extended to landmark-based SLAM~\cite{holmesEfficientGlobalOptimality2023}, in both cases further exploiting problem sparsity. This method was also extended to a \emph{distributed} framework in~\cite{tianDistributedCertifiablyCorrect2021} and has even been integrated into the recent Kimera-Multi pipeline~\cite{tianKimeraMultiRobustDistributed2022}. Not surprisingly, these developments have inspired further advances in the original rotational synchronization problem~\cite{dellaertShonanRotationAveraging2020}. 

Some of these methods boast runtimes that even rival state-of-the-art, local methods (e.g., Gauss-Newton-based methods~\cite{gtsam}), with the added guarantee of a global certificate~\cite{juricComparisonGraphOptimization2021, brialesCartanSyncFastGlobal2017}. An excellent review of the current state of certifiable methods is provided in~\cite{rosenAdvancesInferenceRepresentation2021}.

\subsection{Certifiable Perception with Redundant Constraints}\label{sec:RedundantConstraints}

Though SE-Sync and its derivatives are among the most performant algorithms in certifiable perception, there are several other certifiable perception problems for which low-rank methods do not result in significant performance improvements. In particular, this set of problems includes those whose semidefinite relaxations are not initially tight, but can be tightened through the addition of certain constraints. These constraints are referred as \emph{redundant constraints} because they are redundant in the original formulation of the problem, but become nonredundant (and, indeed, linearly independent) in the lifted, semidefinite relaxation.

This `tightening trick' has been known for some time in the optimization community~\cite{nesterovSemidefiniteProgrammingRelaxations2000}, and has been applied to several problems~\cite{ruizUsingRedundancyStrengthen2011, parriloSemidefiniteProgrammingRelaxations2003a}. In the computer-vision community, redundant constraints have been used to tighten generalized essential-matrix estimation~\cite{zhaoCertifiablyGloballyOptimal2020}, relative-pose estimation between cameras~\cite{garcia-salgueroTighterRelaxationRelative2022, brialesCertifiablyGloballyOptimal2018}, and registration using 3D primitives~\cite{brialesConvexGlobal3D2017}. The formulation given in the latter paper is a degenerate version of our single-pose, matrix-weighted localization formulation (see Section~\ref{sec:Localization}), in which the use of (degenerate) matrix weights is motivated by geometry rather than noise distribution. 

Interestingly, both~\cite{garcia-salgueroTighterRelaxationRelative2022} and~\cite{brialesCertifiablyGloballyOptimal2018} showed that adding redundant constraints increases the level of measurement noise for which their respective problems remain tight. In the context of robotics,~\cite{wiseCertifiablyOptimalMonocular2020} found a similar result when exploring the effect of redundant constraints on a sensor-calibration task.~\cite{yangTEASERFastCertifiable2021} introduced redundant constraints in conjunction with \emph{graduated non-convexity}~\cite{yangGraduatedNonConvexityRobust2020} to solve \emph{robust} point-set registration globally.~\cite{yangOneRingRule2020} and~\cite{yangCertifiablyOptimalOutlierRobust2023} extended these results to several other robust perception problems by leveraging the Lasserre-moment hierarchy~\cite{henrionMomentSOSHierarchyLectures2021,lasserreGlobalOptimizationPolynomials2001}. 

This hierarchy constitutes a powerful set of theoretical tools that are guaranteed to tighten the SDP relaxation of any \emph{polynomial} optimization problem through the use of redundant variables and constraints.\footnote{Note that this class of problems encompasses almost all of the certifiable perception problems to date.} The caveat to this method is that it requires the addition of new variables and constraints -- possibly \emph{ad infinitum} -- and can quickly become intractable in practice. Techniques such as \emph{Douglas-Rachford Splitting} for certification~\cite{yangTEASERFastCertifiable2021} and the STRIDE algorithm~\cite{yangCertifiablyOptimalOutlierRobust2023} have improved runtimes when the moment hierarchy is used, but are still far from real time. 

Since certification generally depends on the number of variables and constraints used, a more efficient tightening approach is search for a small  subset of variables and constraints that is sufficient  to render  a given relaxation \emph{tight}. This reflects the approach that we take in this paper as well as in our concurrent work~\cite{dumbgenGloballyOptimalState2023a}, which searches for redundant constraints using a sampling approach.

\subsection{Our Contributions}

The contributions of this paper are as follows:
\begin{itemize}
	\item We demonstrate that introducing matrix weights (due to anisotropic noise distributions) into existing certifiable state estimation problems can severly impact the tightness of the underlying SDP relaxation.
	\item We show that a set of redundant constraints can be used to regain tightness in these problems. To do this, we leverage results in our concurrent paper~\cite{dumbgenGloballyOptimalState2023a}, which \emph{numerically} find a redundant constraint set for a \emph{specific} problem. We interpret these numerical constraints to find algebraic constraints that can be applied to a broader range of problem instances.
	\item We establish a connection between classical probabilistic interpretations of uncertainty and the \emph{dual} or \emph{certificate} matrix and leverage this connection to further understand our empirical results and the effect of redundant constraints on tightness.
	\item We show that, while the SDP relaxation of matrix-weighted SLAM is intractable for large-scale problems, the relaxation of matrix-weighted Wahba's problem can be solved in near realtime. We apply the latter relaxation in an outdoor, stereo-localization pipeline on real-world data.
\end{itemize}

\section{Background Material}\label{sec:Background}

In this section, we provide the reader with some background material and notation that will be useful for the understanding of subsequent developments.

\subsection{Notation}\label{sec:Notn}
We denote matrices with bold-faced, capitalized letters, $ \bm{A} $, column vectors with bold-faced, lower-case letters, $ \bm{a} $, and scalar quantities with normal-faced font, $ a $.
Let $ \mathbb{S}^n $ denote the space of $ n $-dimensional symmetric matrices and $ \mathbb{S}_+^n $ denote the space of $ n $-dimensional symmetric positive semidefinite matrices. We equivalently write $\bm{A}\succeq\bm{0}$ whenever $\bm{A}\in\mathbb{S}_+^n$. 
Let $ \|\cdot\|_F $ denote the Frobenius norm and let $\left<\bm{A},\bm{B}\right>$ denote the Frobenius inner product. 
Let $ \diag{\bm{A}_1,\cdots,\bm{A}_N} $ denote the block-diagonal matrix with blocks corresponding to matrices $ \bm{A}_1,\cdots,\bm{A}_N $. Note that this includes the case where the $ \bm{A}_i $ are scalar (i.e., $\diag{a_1,\cdots,a_N}$, $a_i \in \mathbb{R}$).
Let $ \bm{I} $ denote the identity matrix, whose dimension will be clear from the context or otherwise specified.
Let $ \bm{0} $ denote the matrix with all-zero entries, whose dimension will be evident from the context.
Let the subscript ``$ 0 $'' denote the world frame.
Let $ \bm{t}_i^{ji} $ denote a vector from frame $ i $ to frame $ j $ expressed in frame $ i $ and $ \bm{C}_{ij} $ denote a rotation matrix that maps vectors expressed in frame $ j $ to equivalent vectors in frame $ i $. For readability, we replace $ \bm{t}_i^{i0} $ with $ \bm{t}_i $ and $\bm{C}_{i0}$ with $\bm{C}_i$. 
Let $ \otimes $ denote the Kronecker product.
Let $ \vert S \vert $ denote the cardinality of the set $ S $.
Let $ \bm{A}^+ $ denote the Moore-Penrose pseudoinverse of a given matrix $ \bm{A} $.
Let $ \vect{\bm{A}} $ denote the column-wise  vectorization (reshape) of a given matrix $ \bm{A} $.
Let $ (\cdot)^{\times} $ denote the linear, skew-symmetric operator as defined in~\cite{barfootStateEstimationRobotics2017}.
Let $\left[N\right] = \left\{1,\cdots, N\right\} \subset \mathbb{N}$ be the set of indexing integers.

\subsection{MAP Estimation and the Fisher Information Matrix}\label{sec:MeasModels}

In robotics, we often frame state estimation as \emph{maximum-a-posteriori} (MAP) problems, in which the optimal estimate is given by
\begin{equation}\label{opt:ConstrainedMAP}
	\bm{x}_c^* = \arg\min\limits_{\bm{x}_c\in\mathcal{M}} -\log\left(p(\bm{x}_c \vert \bm{\mathcal{D}})\right),
\end{equation}
where $p(\bm{x}_c \vert \bm{\mathcal{D}})$ is the posterior distribution function of the estimated parameter, $\bm{x}_c$, given all of the available data, $\bm{\mathcal{D}}$ and $\mathcal{M}$ characterizes the feasible set. In practice, it is common to approximate Problem \eqref{opt:ConstrainedMAP} as follows:
\begin{equation}\label{opt:UnconstrainedMAP}
	\bm{x}^*=\arg\min\limits_{\bm{x}\in\mathbb{R}^p} -\log\left(p(\bm{x} \vert \bm{\mathcal{D}})\right),
\end{equation}
where the optimization variable, $\bm{x}$, has been locally parameterized to ensure that constraints are not explicitly required.\footnote{When state estimates involve pose variables or orientations this unconstrained form can be derived using the Lie algebra vector space of $\mbox{SE}(3)$ or $\mbox{SO}(3)$.}
Optimization then proceeds by iteratively updating the local parameterization until convergence is reached.

It is often the case that we wish to ascertain not only optimal estimates, but also the uncertainty associated with these estimates. To do so, we use the \emph{Laplace approximation}, which models the posterior distribution as a Gaussian centered at the MAP estimate, $\bm{x}^*$, with inverse covariance equal to the so-called \emph{Fisher Information Matrix} (FIM), 
\begin{equation}
	\bm{\Sigma}^{-1} = -\left.\frac{\partial^2}{\partial\bm{x}^2}\log\left(p(\bm{x} \vert \bm{\mathcal{D}})\right)\right|_{\bm{x}=\bm{x}^*_u}.
\end{equation}
The FIM can be extracted directly from the Hessian of the cost function in \eqref{opt:UnconstrainedMAP} (or an approximation thereof)\footnote{When using Gauss-Newton to solve estimation problems, the Hessian is often approximated as the product of the Jacobian with its transpose.}. Its properties have been intensely studied by the robotic state-estimation community~\cite{censiAchievableAccuracyPose2009}. For instance, it is known that the minimum eigenvalues of the FIM (and their respective eigenvectors) characterize the worst-case uncertainty of an estimated parameter. 
Of particular interest here is the fact that the geometry of the measurement data (e.g., aligned uncertainty ellipsoids) can lead to degeneracy of the FIM~\cite{zhangDegeneracyOptimizationbasedState2016}.

\subsection{Measurement Models}

In this paper, we define a \textit{directed graph}, $ \Graph = \left(\VertSet, \EdgeSet\right)$, to keep track of poses  and measurements. The vertex set $ \VertSet = \VertSetP \cup \VertSetM $ is the union of the set of vertices representing poses, $ \VertSetP =\left[N_p\right]$, and the set of vertices representing landmarks, $ \VertSetM = \left[N_m\right]$. We assume that the edge set $\EdgeSet \subset \VertSet \times \VertSet$ is partitioned as $\EdgeSet= \EdgeSet_p \cup \EdgeSet_m$, where $\EdgeSet_p\subset \VertSet_p\times \VertSet_p$ represents relative-pose measurements and $\EdgeSet_m\subset \VertSet_p\times \VertSet_m$ represents measurements of a landmark from a given pose. We assume that each edge, $(i,j)\in \EdgeSet$, is associated with an error term, $\bm{e}_{ij}$, and a matrix weight, $\bm{W}_{ij}$. The $i^{th}$ pose is a member of the Special Euclidean group:
\begin{equation}\label{eqn:pose_var}
	\mbox{SE}(3) = \left\{(\bm{C}_i, \bm{t}_i)~:~\bm{C}_i\in \mbox{SO}(3),~ \bm{t}_i\in \mathbb{R}^3\right\}.
\end{equation}
$\bm{C}_i$ represents the rotation from the world frame to the $i^{th}$ pose frame and $\bm{t}_i$ represents the translation vector from the world frame to the pose frame, expressed in the pose frame. Moreover, we will make use of the following \textit{homogeneous transformation} to represent a given robot pose:
\begin{equation}
	\bm{T}_i = \begin{bmatrix}
		\bm{C}_i & -\bm{t}_i \\ \bm{0} & 1
	\end{bmatrix}.
\end{equation}

Here, we consider MAP optimization problems over pose and landmark variables in which the cost (log-posterior) can be expressed in the following factored form\cite{barfootStateEstimationRobotics2017}:
\begin{equation}\label{eqn:factor_graph}
	\begin{gathered}
		-\log\left(p(\bm{x} \vert \bm{\mathcal{D}})\right) = \sum\limits_{(i,j)\in\EdgeSet_p}  J_{ij}^{p} + \sum\limits_{(i,k)\in\EdgeSet_m} J_{ik}^{m}, \\ 
		J_{ij}^{p} = \bm{e}_{ij}^T \bm{W}_{ij} \bm{e}_{ij}, ~J_{ik}^{m} = \bm{e}_{ik}^T \bm{W}_{ik} \bm{e}_{ik},
	\end{gathered}
\end{equation}
where $J_{ij}^{p}$ and $J_{ik}^{m}$ are the cost `factors', with error terms, $\bm{e}_{ij}$ and $\bm{e}_{ik}$, that depend on the problem variables and weighting matrices, $\bm{W}_{ij}$ and $\bm{W}_{ik}$. 
The exact form of these terms are discussed in the next two sections. Though our formulation here is matrix-weighted in general, we focus on \emph{anisotropic} noise for pose-landmark measurements, with relative-pose measurements remaining \emph{isotropic}.

\subsubsection{Matrix-Weighted Pose-Landmark Measurements}\label{sec:LandmarkMeas}

Each edge, $(i,k) \in \EdgeSet_m $, represents a measurement of the $k^{th}$ landmark from the $i^{th}$ pose. In robotics, the sensors that provide measurements of landmarks can often be modeled as
\begin{equation}
	\bm{d}_{ik} = \bm{g}(\bm{C}_i\bm{m}_{0}^{k0} - \bm{t}_i) + \bm{\epsilon}_{d,ik}, \quad \bm{\epsilon}_{d,ik} \in \mathcal{N}(\bm{0},\bm{\Sigma}_{d,ik}),
\end{equation}
where $ \bm{d}_{ik} $ represents the (raw) measurement of landmark $k$ from the $i^{th}$ pose variable, $\bm{m}_{0}^{k0}$ is the location of landmark $k$ in the global frame, $ \bm{g}(\cdot) $ is an invertible, smooth, non-linear function, and $ \bm{\epsilon}_{d,ik} $ is a zero-mean error term having normal distribution with associated covariance matrix, $ \bm{\Sigma}_{d,ik} $. 

It is often desirable to convert measurements $ \bm{d}_{ik} $ to a more convenient form by inverting the measurement model (if possible) and defining the pseudo-measurement, $\tilde{\bm{m}}_i^{ki} = \bm{g}^{-1}(\bm{d}_{ik})$, with \rev{approximate} mean given by
\begin{equation*}
	\mathbb{E}(\tilde{\bm{m}}_i^{ki})~\rev{\approx}~\bm{C}_i\bm{m}_{0}^{k0} - \bm{t}_i.
\end{equation*} 
The deviation from the mean, given by
\begin{equation}\label{eqn:lm_error_term}
	\bm{e}_{ik} = \tilde{\bm{m}}_i^{ki} - (\bm{C}_i\bm{m}_{0}^{k0} - \bm{t}_i),
\end{equation}
is \emph{approximately} Gaussian with zero mean and is exactly the error term for this measurement factor. Regardless of whether $ \bm{\Sigma}_{d,ik} $ represents isotropic noise, the (linearly-transformed) covariance of $\bm{\epsilon}_{ik} $ is typically \textit{anisotropic} and is given by
\begin{equation}
	\bm{\Sigma}_{ik} = \bm{G}^T \bm{\Sigma}_{d,ik} \bm{G},
\end{equation}
where $\bm{G}$ is the Jacobian of the inverse measurement function $\bm{g}^{-1}(\cdot)$~\cite{matthiesErrorModelingStereo1987}. 
The weighting matrix in the cost factor in the MAP estimation, \eqref{eqn:factor_graph}, is then given by the inverse of the covariance matrix, $ \bm{W}_{ik} = \bm{\Sigma}^{-1}_{ik} $. Note that solving the problem under the simplifying assumption of isotropic noise -- that is, $ \bm{W}_{ik} = \sigma_{ik} \bm{I}$ with $ \sigma_k \in \mathbb{R}$ -- can be extremely detrimental to the quality of the final solution~\cite{matthiesErrorModelingStereo1987,maimoneTwoYearsVisual2007}. 

A key example of such a situation arises in a common preprocessing step of stereo-vision problems, in which pixel and disparity measurements are converted to Euclidean point measurements by inverting a known camera model.  It has been shown that this results in measurement uncertainty that is much larger in the depth direction of a given camera pose~\cite{matthiesErrorModelingStereo1987,barfootPoseEstimationUsing2011} (c.f. Figure~\ref{fig:slam_local_min}). A derivation of the inverse stereo measurement model and associated covariance is provided in Appendix~\ref{App:stereo}. 

\subsubsection{Relative-Pose Measurements}\label{sec:RelPoseMeas}

Each edge, $(i,j) \in \EdgeSet_p $, represents a relative-pose measurement, $(\tilde{\bm{C}}_{ij},\tilde{\bm{t}}_i^{ji}) \in \mbox{SE}(3)$ and its associated homogeneous transformation, $\tilde{\bm{T}}_{ij}$.  In a robotics context, these relative-pose measurements often represent IMU-based measurement information, dynamics-based prior belief propagation, or aggregate measurement data between keyframes. Similarly to~\cite{brialesCartanSyncFastGlobal2017}, we define the following relative-pose error term:
\begin{equation}\label{eqn:rel_pose_err}
	\bm{e}_{ij} = \vect{\tilde{\bm{T}}_{ij}\bm{T}_j - \bm{T}_i}.
\end{equation}
Though $\bm{W}_{ij} \succeq \bm{0} $ can be an \emph{arbitrary}, positive semidefinite matrix, we select weight matrices of the form $\bm{W}_{ij}=\diag{\sigma^2_{ij},\sigma^2_{ij},\sigma^2_{ij},\tau^2_{ij}}^{-1} \otimes \bm{I}$, since it allows us to express our cost factor as
\begin{align}\label{eqn:rel_pose_cost}
	J_{ij} &= \frac{1}{\sigma^2_{ij}} \left\Vert \tilde{\bm{C}}_{ij}\bm{C}_{j0} - \bm{C}_i\right\Vert_F^2 + \frac{1}{\tau^2_{ij}} \left\Vert \tilde{\bm{t}}^{ji}_{i} - \tilde{\bm{C}}_{ij}\bm{t}_j + \bm{t}_i \right\Vert_2^2 \nonumber\\
	&\hspace*{3pt}=\bm{e}_{ij}^T \bm{W}_{ij} \bm{e}_{ij}.
\end{align}
where $\sigma_{ij}$ and $\tau_{ij}$ are scalar weights.\footnote{Note that these weights represent isotropic noise, although representing other distributions may be possible.} This form of cost factor is similar to the cost used in pose-graph optimization problems (c.f.~\cite{brialesCartanSyncFastGlobal2017, rosenSESyncCertifiablyCorrect2019, holmesEfficientGlobalOptimality2023}) and is quadratic for our choice of pose variables, \eqref{eqn:pose_var}.\footnote{The choice to represent translation variables in the \emph{local} frame ($\bm{t}_i$) rather in the \emph{world} frame ($\bm{t}_0^{i0}$) allows us to keep the cost function quadratic in the variables.}

The relative-pose error formulation also allows for the addition of \emph{prior} information about the pose variables to \eqref{eqn:factor_graph}. In this case, the relative-pose error is defined with respect to the world frame:
\begin{equation}\label{eqn:prior_pose_err}
	\bm{e}_{0j} = \vect{\tilde{\bm{T}}_{0j}\bm{T}_j - \bm{I}}.
\end{equation}

\subsection{Convex Relaxations of QCQPs}\label{sec:Relaxation}

We review the well-known procedure for deriving convex, SDP relaxations of a standard form of polynomial optimization problem.\footnote{For the sake of brevity, our introduction of these problems and concepts in this section have been stated without proof. The interested reader is referred to~\cite{cifuentesLocalStabilitySemidefinite2022,vandenbergheSemidefiniteProgramming1996} and references therein for more extensive expositions.} This procedure was pioneered in~\cite{shorQuadraticOptimizationProblems1987} and has become the cornerstone of \textit{certifiably correct methods} in robotics and computer vision~\cite{rosenAdvancesInferenceRepresentation2021,brynteTightnessSemidefiniteRelaxations2022}. Here, we consider a \textit{homogeneous}, \textit{quadratically constrained quadratic problem} (QCQP) expressed in the following standard form:
\begin{equation}\label{opt:QCQP}
	\min\limits_{\bm{z}} \left\{ \bm{z}^T\bm{Q}\bm{z} ~\vert~ \bm{z}^T\bm{A}_i\bm{z} = 0,\forall i \in \indset{N_c}, \bm{z}^T\bm{A}_0\bm{z} = 1 \right\},
\end{equation}
where $ \bm{z}\in\mathbb{R}^n $ is the homogeneous optimization variable, $ \bm{Q}\in\mathbb{R}^{n\times n} $ represents the quadratic cost, the $ \bm{A}_i $ correspond to $ N_c $ quadratic constraints, and $ \bm{A}_0 $ corresponds to the so-called \emph{homogenizing constraint}\cite{cifuentesLocalStabilitySemidefinite2022}. Note that any problem with quadratic cost and constraints can be converted into a problem of this form~\cite{cifuentesLocalStabilitySemidefinite2022}. This includes cost functions of the form given in \eqref{eqn:factor_graph}, as long as the error terms are \emph{linear} in the optimization variables and the constraints are quadratic.

In general, problems with quadratic equality constraints, such as QCQPs, are difficult to solve optimally because they are non-convex~\cite{boydConvexOptimization2004}. However, a popular approach to obtaining globally optimal solutions to QCQPs involves formulating the convex, SDP  relaxation of the QCQP and subsequently showing that this relaxation is \emph{tight}. In this context, `tight' means that the relaxation and the original problem have the same optimal cost and equivalent minimizers.

It can be shown that Problem \eqref{opt:QCQP} is equivalent to the following problem~\cite{cifuentesLocalStabilitySemidefinite2022}:
\begin{equation}\label{opt:R1SDP}
	\begin{array}{rl}
		\min\limits_{\bm{X}} & \inner{\bm{Q}}{\bm{X}}\\
		s.t.&\inner{\bm{A}_{i}}{\bm{X}}= 0, \quad \forall i \in \indset{N_c}\\
		& \inner{\bm{A}_0}{\bm{X}} = 1,\\
		& \bm{X} \succeq \bm{0},\\
		& \mbox{rank}(\bm{X})=1,
	\end{array}
\end{equation}
where the last two constraints implicitly enforce the fact that $\bm{X} = \bm{x}\bm{x}^T$. In Problem~\eqref{opt:R1SDP}, we see that all the non-convexity of the problem has been relegated to a single non-convex constraint on the rank. It follows that we can find a convex relaxation of Problem~\eqref{opt:R1SDP} by removing the rank constraint: 
\begin{equation}\label{opt:SDP}
	\begin{array}{rl}
		\min\limits_{\bm{Z}} & \inner{\bm{Q}}{\bm{Z}}\\
		s.t.&\inner{\bm{A}_i}{\bm{Z}}= 0, \quad \forall i = \left[N_c\right]\\
		& \inner{\bm{A}_0}{\bm{Z}}= 1, \\
		& \bm{Z} \succeq \bm{0}.
	\end{array}
\end{equation}
It is well known that if the optimal solution $ \bm{Z}^* $ satisfies $ \rank{\bm{Z}^*}=1 $ then the convex relaxation is \textit{tight} to the original QCQP. Accordingly, this condition implies that the solution can be factorized as $ \bm{Z}^* = \bm{z}^* \bm{z}^{*T} $, where $ \bm{z}^* $ is the \textit{globally optimal} solution of Problem \eqref{opt:QCQP}.

We note that Problems \eqref{opt:SDP} and \eqref{opt:R1SDP} have the same Lagrangian dual problem, which plays a central role in certifying global optima and can be expressed as follows:
\begin{equation}\label{opt:Dual}
	\begin{array}{rl}
		d^*=\max\limits_{\bm{H},\bm{\lambda},\rho} &-\rho \\ 
		s.t. & \bm{H}(\bm{\lambda},\rho) = \bm{Q} + \rho\bm{A}_0 +\sum\limits_{i=1}^{N_c} \lambda_i \bm{A}_i, \\
		& \bm{H}(\bm{\lambda},\rho) \succeq \bm{0},
	\end{array}
\end{equation}
where $ \rho $ and $ \bm{\lambda} = \left[\lambda_1~\cdots~\lambda_{N_c}\right]^T $ are the Lagrange multipliers corresponding to the single homogenizing and the $N_c$ quadratic constraints, respectively. 

Since Problem \eqref{opt:SDP} is convex, it has the same optimal cost as its dual, Problem \eqref{opt:Dual}.\footnote{This is a consequence of the fact that strong duality holds for convex problems under sufficient constraint qualifications\cite{boydConvexOptimization2004}. In our case, we ensure that the SDPs that we solve satisfy the Linear Independent Constraint Qualification.} When the relaxation is tight, the optimal cost of the original QCQP, Problem \eqref{opt:QCQP}, will also be equal to the dual problem cost, a condition known as \emph{strong duality}.

\rev{The complementary slackness conditions of \eqref{opt:SDP} and its dual, \eqref{opt:Dual}, provide the following relationship between the rank of the primal and dual solution matrices,
\begin{equation}
	\mbox{rank}(\bm{Z}^*) \leq n-\mbox{rank}(\bm{H}^*)=\mbox{corank}(\bm{H}^*),
\end{equation}
where $\bm{H}^*$ represents the dual matrix at the optimal Lagrange multipliers $(\bm{\lambda}^*,\rho^*)$.\footnote{We replace $\bm{H}(\bm{\lambda},\rho)$ with $\bm{H}$ when it is clear from the context.} When \eqref{opt:SDP} satisfies the so-called \emph{strict complementarity} property, this inequality is binding (i.e., an equality). In the sequel, we will make use of this relationship to study the tightness of the relaxation by analyzing the dual matrix, $\bm{H}^*$, which is more directly connected to the problem data.}

In practice, there are typically two ways that globally optimal solutions can be obtained for a QCQP with a tight semidefinite relaxation:
\begin{enumerate}
	\item We can solve Problem \eqref{opt:SDP} or Problem \eqref{opt:Dual} \footnote{The dual problem is solved in~\cite{brialesCartanSyncFastGlobal2017} and is then used to recover the primal solution.} directly and extract the globally optimal solution.
	\item Given a candidate solution, $ \hat{\bm{z}} $, found via fast optimization methods, we can certify its global optimality using the dual or \emph{certificate} matrix, $\bm{H}$.
\end{enumerate}

In robotics, there is a strong preference for the latter method, since it is often much more computationally efficient than the former. State-of-the-art methods leverage low-rank SDP techniques such as that of \emph{Burer and Monteiro} (BM)~\cite{burerNonlinearProgrammingAlgorithm2003a} and the \emph{Riemannian Staircase}~\cite{rosenSESyncCertifiablyCorrect2019, boumalNonconvexBurerMonteiro2016} to solve larger problem instances in real time.

However, these methods are far less performant when redundant constraints are introduced to tighten the semidefinite relaxation.\footnote{For example, the BM approach requires a verification that the solution is a second-order critical point~\cite{boumalDeterministicGuaranteesBurerMonteiro2020a}. When redundant constraints are used, this verification step necessarily involves solving a different (non-low-rank) SDP, since they are not uniquely determined.} As such, a key goal of this paper is to establish whether SDP relaxations of problems with matrix weights require redundant constraints and, if so, whether it is nevertheless possible to find global solutions efficiently enough for robotics applications.

\section{Problem Formulations}\label{sec:Formulations}

In this section, we provide QCQP formulations for the matrix-weighted localization and SLAM problems that are considered herein. We demonstrate how the costs can be formulated in the standard quadratic form of Problem \eqref{opt:QCQP}. In the interest of brevity, we do not explicitly show the conversion of the constraints into the standard form.

\subsection{Matrix-Weighted Localization}\label{sec:Localization}

In this section, we consider localization problems with matrix weights. That is, we consider the problem of determining the estimate of a sequence of poses given matrix-weighted measurements of a set of \emph{known} landmarks. We assume that point correspondences are known and correct. 

The maximum-likelihood estimate of the poses is given by the solution to the following least-squares optimization problem:
\begin{equation}
	\label{opt:Localize}
	\begin{array}{rl}
		\min\limits_{\bm{C}_i,\bm{t}_i, \forall  i \in \VertSetP} &\sum\limits_{(i,k)\in\EdgeSet_m} \bm{e}_{ik}^T \bm{W}_{ik} \bm{e}_{ik} \\
		\mbox{s.t.} &  \bm{C}_i^T \bm{C}_i = \bm{I}, ~\forall i = \VertSetP,\\
		& (\bm{c}_i^{1})^\times\bm{c}_i^{2} - \bm{c}_i^{3} = \bm{0},~\forall i = \VertSetP, \\
		\mbox{where} & \bm{e}_{ik} = \tilde{\bm{m}}_i^{ki} - \bm{C}_i\bm{m}_{0}^{k0} + \bm{t}_i ,
	\end{array}
\end{equation}
where the variable definitions are consistent with those provided in Section~\ref{sec:MeasModels} and $\bm{c}_i^j$ denotes the $j^{th}$ column of matrix $\bm{C}_i$. It has been shown that the two constraints on the $\bm{C}_i$ in \eqref{opt:Localize} are sufficient to ensure that $\bm{C}_i\in\mbox{SO}(3)$ and can be expressed in the standard quadratic form of \eqref{opt:QCQP}~\cite{tronInclusionDeterminantConstraints}. Therefore, this problem is a QCQP since $\bm{e}_{ik}$ is \emph{linear} in $\bm{C}_i$ and $\bm{t}_i$, and the constraints are quadratic. 

It is important to note that this problem is separable in each set of pose variables, since there are no constraints or cost elements linking any two poses. Therefore, the problem may be divided into $N_p$ subproblems, each being equivalent to the matrix-weighted version of Wahba's problem~\cite{chengTotalLeastSquaresEstimate2019, wahbaLeastSquaresEstimate1965}.\footnote{In the remainder of this paper, we refer to single-pose instances of Problem~\ref{opt:Localize} as Wahba's problem, though it is also known as registration or single-pose localization.} In the scalar-weighted case ($ \bm{W}_{ik} = w_{ik} \bm{I} $), there is a closed-form, global solution to Wahba's problem~\cite{hornClosedFormSolutionAbsolute1988}. However, for the general matrix-weighted case, a key simplification of the cost function is no longer possible and solutions must be found iteratively with no guarantee of global optimality~\cite{barfootPoseEstimationUsing2011,barfootStateEstimationRobotics2017, crassidisSurveyNonlinearAttitude2007}. 

We collect the relevant optimization variables into a single vector, $ \bm{x}_i^T = \begin{bmatrix} \bm{c}_i^T &  \bm{t}_i^T & w \end{bmatrix} $, where $ \bm{c}_i=\vect{\bm{C}_i} $, which allows us to re-express the cost element as
\begin{equation}
	J_{ik}= \bm{x}_i^T\bm{Q}_{ik}\bm{x}_i,
\end{equation}
where the symmetric cost matrix, $\bm{Q}_{ik}$, is given in Appendix~\ref{App:LocCost}. We have also introduced a so-called \textit{homogenizing variable}, $ w $, which is subject to the \textit{homogenizing constraint}, $ w^2 = 1 $ and facilitates the reformulation of Problem \eqref{opt:Localize} into the standard form of Problem \eqref{opt:QCQP}~\cite{wiseCertifiablyOptimalMonocular2020, cifuentesLocalStabilitySemidefinite2022}.

The full cost of Problem \eqref{opt:Localize} can be constructed by permuting and summing the matrices, $\bm{Q}_{ik}$, according to edges in $\EdgeSet_m$ and a given pose variable ordering:
\begin{equation}\label{eqn:locVar}
	\bm{z}^T = \begin{bmatrix}
		\bm{c}_{1}^T & \bm{t}_1^T & \cdots &  \bm{c}_{N_p}^T & \bm{t}_{N_p}^{T} & w
	\end{bmatrix}.
\end{equation}

In practice, it is more efficient to leverage the separability of the problem and solve for each pose via separate instances of Wahba's problem. Since each separate problem is small in dimension (13-by-13), its SDP relaxation can be solved quickly using modern interior-point solvers (see Section~\ref{sec:OutdoorLoc} for runtimes with this approach).

The single-pose version of Problem \eqref{opt:Localize} admits a convex semidefinite relaxation that was also explored in~\cite{brialesConvexGlobal3D2017} and~\cite{olssonSolvingQuadraticallyConstrained2008} with the motivation of representing different geometric primitive measurements (lines and planes), rather than anisotropic noise. It was found that the addition of a few key redundant constraints made the relaxation tight even when noise levels were high. We corroborate these results for anisotropic noise with an extensive analysis in Section~\ref{sec:Simulations}.

We could potientially introduce the \textit{relative-pose measurements} described in Section~\ref{sec:RelPoseMeas} to Problem \eqref{opt:Localize} by adding the associated cost factors to the objective. The cost function given in \eqref{eqn:rel_pose_cost} is a quadratic function in the optimization variables and can be expressed in the standard homogeneous QCQP form, as shown in Appendix~\ref{App:LocCost}. However, the addition of such factors to Problem \eqref{opt:Localize} destroys the separability property, meaning that directly solving the SDP relaxation becomes much slower for reasonably sized problems (e.g., $N_p \geq 20$).

In Appendix~\ref{App:SDPStability}, we prove that the SDP relaxation of Problem \eqref{opt:Localize} is always tight when measurement noise levels are low and the weighting matrices are non-degenerate.

\subsection{Matrix-Weighted Landmark-based SLAM}\label{sec:SLAM}

In this section, we explore the effect of matrix weighting when the landmark locations, $\bm{m}_{0}^{k0}$, are not known \emph{a priori}. The resulting problem, known as landmark-based SLAM, is given by,

\begin{equation}
	\label{opt:landmark_SLAM}
	\hspace*{-10pt}\begin{array}{rl}
		\min\limits_{\substack{\bm{C}_i,\bm{t}_i, \bm{m}_{0}^{k0}\\ \forall i\in \VertSetP,\\ \forall k\in\VertSetM } } &\sum\limits_{(i,k)\in\EdgeSet_m}  \bm{e}_{ik}^T \bm{W}_{ik} \bm{e}_{ik} +\sum\limits_{(i,j)\in\EdgeSet_p}  \bm{e}_{ij}^T \bm{W}_{ij} \bm{e}_{ij}\\
		\mbox{where} & \bm{e}_{ik} = \tilde{\bm{m}}_i^{ki} - \bm{C}_{i0}\bm{m}_{0}^{k0} + \bm{t}_i^{i0} , \\
		&\bm{e}_{ij} = \vect{\tilde{\bm{T}}_{ij}\bm{T}_{j0} - \bm{T}_{i0}}, \\
		\mbox{s.t.} &\bm{C}_i^T \bm{C}_i = \bm{I},~\forall i \in \VertSetP.\\
		& (\bm{c}_i^{1})^\times\bm{c}_i^{2} - \bm{c}_i^{3} = \bm{0},~\forall i = \VertSetP,.
	\end{array}
\end{equation}
Note that the cost of \eqref{opt:landmark_SLAM} includes both landmark measurements and relative-pose measurements.
In the \emph{scalar-weighted} context, the convex relaxation of an equivalent problem has already been studied and was generally found to be tight for noise levels well above those found in practical robotics scenarios~\cite{holmesEfficientGlobalOptimality2023}. 

On the other hand, when matrix weights are used (i.e., the noise distribution is anisotropic), the landmark-based error term \eqref{eqn:lm_error_term} becomes \emph{quadratic} in the optimization variables (due to the $\bm{C}_i\bm{m}_{0}^{k0}$ terms) and the cost function of Problem \eqref{opt:landmark_SLAM} becomes \emph{quartic}. In the scalar-weighted case ($\bm{W}_{ik} = w_{ik}\bm{I}$), this issue is obviated by premultiplying the error terms by the inverse pose rotations to regain \emph{linearity} of the error terms. However, preforming this operation in the matrix-weighted case will change the cost function, since $\bm{C}_i^T\bm{W}_{ik}\bm{C}_i\neq \bm{W}_{ik}$ when $\bm{W}_{ik}$ is an arbitrary positive definite matrix.\footnote{This reflects the fact that the measurement noise \emph{has an orientation} and depends on the (unknown) observation frame. This would not be an issue if all measurements were defined in a common frame, but, in practice, measurements are typically taken in the robot's frame of reference.}

In order to cast Problem \eqref{opt:landmark_SLAM} as a QCQP, we follow a similar strategy to~\cite{dumbgenSafeSmoothCertified2023} and~\cite{brialesCertifiablyGloballyOptimal2018} by introducing \textit{substitution variables}, $ \bm{m}_i^{ki} $, corresponding to each available measurement, $ \tilde{\bm{m}}_i^{ki} $. These new variables must satisfy the following (quadratic) constraints:
\begin{equation}
	\bm{m}_i^{ki} = \bm{C}_i\bm{m}_{0}^{k0} - \bm{t}_i, \quad \forall (i,k)\in\EdgeSet_m.
\end{equation}

When the landmarks become unknown, a gauge freedom is also introduced into the problem. This has an important implication for the SDP solution; the well-known SLAM gauge freedom results in solution symmetries that cause the rank of the SDP solution to be higher than one, even when it is numerically tight~\cite{brialesCertifiablyGloballyOptimal2018}. To fix this freedom, we assume that a \emph{prior} factor term is included in the pose-graph terms for at least one pose.\footnote{As with SLAM problems in general, the pose variable in the prior, $\tilde{\bm{T}}_{j0}$, is typically used to `lock' the solution to a known pose using some exteroceptive measurement, such as GPS. However, for our purposes it can be set arbitrarily without loss of generality.}

The optimization can now be written as
\begin{equation}
	\hspace*{-5pt}
	\label{opt:SLAM}
	\begin{array}{rl} 
		\min \limits_{ \substack{\bm{C}_i,\bm{t}_i, \\ \bm{m}_{0}^{k0},\bm{m}_i^{ki} \\ \forall i\in \VertSetP,~ \forall k\in\VertSetM } } & \hspace*{-20pt}\sum\limits_{(i,k)\in\EdgeSet_m} \bm{e}_{ik}^T \bm{W}_{ik} \bm{e}_{ik} + \hspace*{-7pt}\sum\limits_{(i,j)\in\EdgeSet_p}  \bm{e}_{ij}^T \bm{W}_{ij} \bm{e}_{ij}\hfill \\
		\mbox{where} & \bm{e}_{ik} = \tilde{\bm{m}}_i^{ki} - \bm{m}_i^{ki},\\
		&\bm{e}_{ij} = \vect{\tilde{\bm{T}}_{ij}\bm{T}_j - \bm{T}_i},\\
		\mbox{s.t.}&\bm{C}_i^T \bm{C}_i = \bm{I},~\forall i \in \VertSetP.\\
		& (\bm{c}_i^{1})^\times\bm{c}_i^{2} - \bm{c}_i^{3} = \bm{0},~\forall i = \VertSetP,\\
		& \bm{m}_i^{ki} = \bm{C}_i\bm{m}_{0}^{k0} - \bm{t}_i, \quad \forall (i,k)\in\EdgeSet_m.
	\end{array}
\end{equation}
The pose-landmark cost elements can now be written in the standard form:
\begin{equation*}
	J_{ik}=\begin{bmatrix}
		\bm{m}_i^{ki}\\w
	\end{bmatrix}^T \begin{bmatrix}
		\bm{W}_{ik} & -\bm{W}_{ik}\tilde{\bm{m}}_i^{ki}\\
		-\tilde{\bm{m}}_i^{ki^T}\bm{W}_{ik} & \tilde{\bm{m}}_i^{ki^T}\bm{W}_{ik}\tilde{\bm{m}}_i^{ki}
	\end{bmatrix}\begin{bmatrix}
	\bm{m}_i^{ki}\\w
	\end{bmatrix}
\end{equation*}
The cost elements can be then permuted and summed according to our variable ordering,
\begin{multline}
	\bm{z}^T = \left[\begin{matrix}\vect{\bm{C}_{10}}^T &\bm{t}_1^{10^T} & \cdots &\vect{\bm{C}_{N_p0}}^T&\bm{t}_{N_p}^{N_p0^T} \end{matrix}\right. \\
		\left.\begin{matrix} \bm{m}_{0}^{k0^T} &\cdots& \bm{m}_i^{ki^T} & \cdots & w \end{matrix}\right],
\end{multline}
and we can apply the semidefinite relaxation described in~\ref{sec:Relaxation}. However, we have observed that, contrary to the scalar case, this SDP relaxation is \emph{not tight} even for low levels of noise. A similar situation occurred in~\cite{brialesCertifiablyGloballyOptimal2018} and~\cite{yangTEASERFastCertifiable2021} when substitution variables were introduced (though not when introduced in~\cite{dumbgenSafeSmoothCertified2023}).

\subsection{Tightening the Relaxations}\label{sec:Tightening}

One of the key contributions of this paper is a concise set of redundant constraints for the problems presented above that is capable of tightening their respective relaxations. In the next sections, we empirically show that these constraints are capable of restoring relaxation tightness that is otherwise destroyed by the introduction of anisotropic noise.

One approach that can be used to tighten this relaxation is to apply the Lasserre-moment hierarchy~\cite{henrionMomentSOSHierarchyLectures2021, lasserreGlobalOptimizationPolynomials2001}. However, this method often introduces a \emph{prohibitive} number of additional  variables and constraints to the problem~\cite{yangCertifiablyOptimalOutlierRobust2023}. 

In constrast, we have opted to \emph{discover} a smaller set\footnote{Smaller in the sense that we do not include \emph{all} possible constraints or further ascend Lasserre's hierarchy.} of redundant constraints that is sufficient to render each of the problems above tight for reasonable levels of noise.  To do this, we leverage a \emph{constraint-learning} concept from our concurrent paper, which, \emph{for a given problem instance}, uses samples that are drawn from the feasible set to numerically find all possible constraints via a nullspace argument~\cite{dumbgenGloballyOptimalState2023a}. \rev{Since it involves finding the nullspace of a large data matrix, this constraint-learning process can be computationally intensive and it is not feasible to repeat it for each new problem instance.}

\rev{Therefore, we}applied the constraint-learning method to small instances of the problems defined above \rev{and} interpreted the \rev{resulting} numerical, learned constraints. \rev{We then} cast them as equations that are commensurate with the properties of $\mbox{SO}(3)$ and could be extended to problems of any size (any number of landmarks and poses). 

\rev{For example, for Problem~\eqref{opt:SLAM}, we randomly initialized a problem with three poses and seven landmarks and drew samples from the feasible set. Using these samples, the methodology in~\cite{dumbgenGloballyOptimalState2023a} allowed us to numerically find a set of viable constraint matrices for this problem. An example of one such constraint matrix is shown in Figure~\ref{fig:constraint_ex}. The constraint matrix corresponds to the equation $\bm{z}^T \bm{A} \bm{z} = 0$, where $\bm{A}$ is the matrix and $\bm{z}$ is a subset of variables affected by this constraint. In this case, 
\begin{equation*}
	\bm{z}^T = \begin{bmatrix}
		w & \vect{\bm{C}_1}^T & \bm{m}^{20^T}_0 & \bm{m}^{40^T}_0 & \bm{m}^{21^T}_1 & \bm{m}^{41^T}_1
	\end{bmatrix}.
\end{equation*}
The constraint can be interpreted as the following polynomial equation:
\begin{align*}
	&-  w \left[\bm{m}_1^{21}\right]_3 +   w \left[\bm{m}_1^{41}\right]_3 +   \left[  \vect{\bm{C}_1}\right]_3 \left[\bm{m}_0^{20}\right]_1 \\
	&+ \left[  \vect{\bm{C}_1}\right]_6 \left[\bm{m}_0^{20}\right]_2 +   \left[  \vect{\bm{C}_1}\right]_9 \left[\bm{m}_0^{20}\right]_3 \\
	& -  \left[  \vect{\bm{C}_1}\right]_3 \left[\bm{m}_0^{40}\right]_1  -  \left[  \vect{\bm{C}_1}\right]_6 \left[\bm{m}_0^{40}\right]_2 \\
	&- \left[  \vect{\bm{C}_1}\right]_9 \left[\bm{m}_0^{40}\right]_3=0,
\end{align*}
where $\left[\bm{v}\right]_i$ denotes the $i^{th}$ element of vector $\bm{v}$. Further study of this equation in light of the properties of $\mbox{SO}(3)$ reveals that it can be generalized as the third vector component of the following constraint equation:
\begin{equation*}
	\bm{C}_i\left(\bm{m}_{0}^{l0}-\bm{m}_{0}^{k0}\right) - \left(\bm{m}_i^{li}-\bm{m}_i^{ki}\right) = \bm{0},
\end{equation*}
which describes distances between landmarks in the world and $i^{th}$ frames. The \emph{analytical} version of this constraint can be applied to other problem instances (i.e., with different numbers of landmarks and poses). Since it was found to be effective in tightening other instances of Problem~\eqref{opt:SLAM}, we added it to our list of effective constraints,~\eqref{eqn:slam_constraints}. Proceeding in this manner, we selected a set of constraints that were empirically found to lead to tightness across various problem instances.}

\begin{figure}[!t]
	\centering
	\includegraphics[width=\columnwidth]{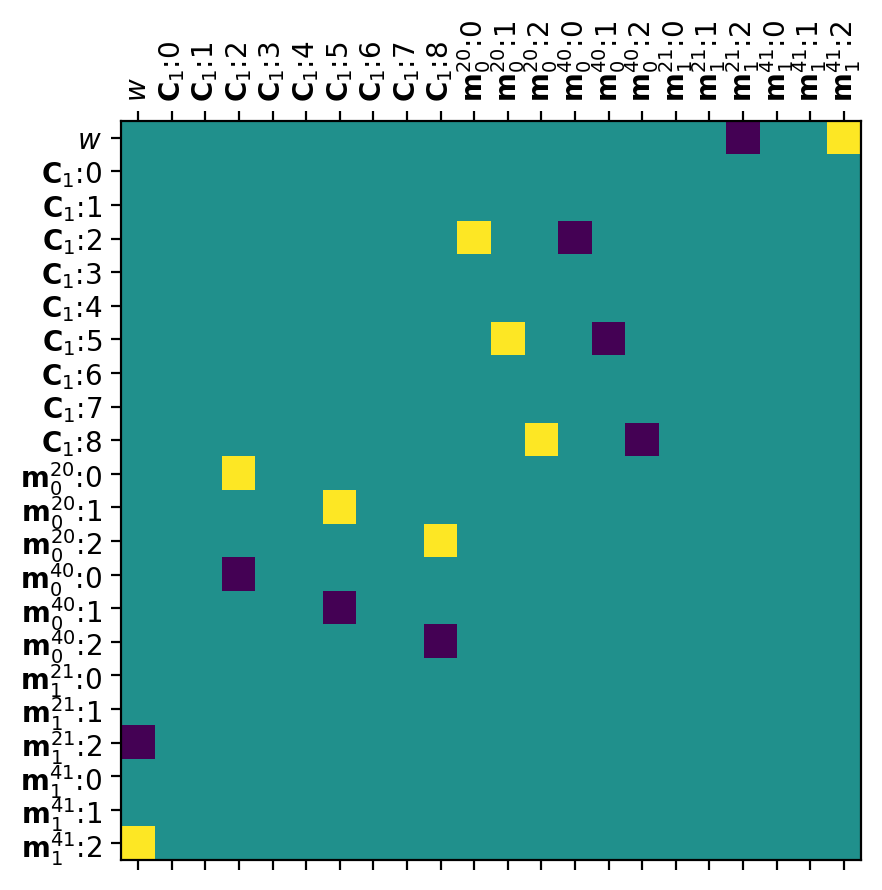}
	\caption{\rev{Redundant constraint matrix, $\bm{A}$, found numerically via the method in~\cite{dumbgenGloballyOptimalState2023a}, with colours indicating the relative values of the matrix (teal: 0, yellow: 1, purple: -1). The constraint matrix corresponds to the equation $\bm{x}^T \bm{A} \bm{x} = 0$, where $\bm{x}$ is a subset of variables affected by this constraint.}}
	\label{fig:constraint_ex}
\end{figure}

The concept of SDP stability~\cite{cifuentesLocalStabilitySemidefinite2022} provides \emph{some} guarantees that similar problems to the ones that we study will also be tight \rev{with these constraints}, though we cannot \emph{a priori} claim that \emph{any} instance of Problem \eqref{opt:SLAM} will be tight. 

We found that the following set of redundant constraints could be used to tighten Problem \eqref{opt:Localize}:
\begin{subequations}\label{eqn:so3_constraints}
	\begin{flalign}
		\bm{C}_i \bm{C}_i^T - \bm{I} = \bm{0},
	\end{flalign}
	\vspace*{-20pt}
	\begin{flalign}
		(\bm{C}_i^{j})^\times\bm{C}_i^{k} - \bm{C}_i^{l} = \bm{0}, ~\forall (j,k,l)\in \left\{(2,3,1), (3,1,2)\right\},
	\end{flalign}
	\vspace*{-20pt}
	\begin{flalign}
		\bm{c}_i^{j^T}\bm{c}_i^{j} - \bm{r}_{i}^{k^T}\bm{r}_{i}^{k} = \bm{0}, ~\forall j,k\in \left\{1,2,3\right\},
	\end{flalign}
\end{subequations}
where $\bm{c}_i^{j}$ and $\bm{r}_{i}^{k}$ represent the $j^{th}$ column and $k^{th}$ row of $\bm{C}_i$, respectively, and $\mbox{cyclic}()$ denotes the cyclic group. Note that the first two redundant constraints were also necessary in~\cite{brialesConvexGlobal3D2017} and~\cite{wiseCertifiablyOptimalMonocular2020}. 

Moreover, our SLAM problem (Problem \eqref{opt:SLAM}) can be tightened by using \eqref{eqn:so3_constraints} in conjunction with the following constraints:
\begin{subequations}\label{eqn:slam_constraints}
	\begin{flalign}
		\bm{m}_{0}^{l0^T} \bm{m}_{0}^{k0} - (\bm{m}_i^{li}-\bm{t}_i)^T (\bm{m}_i^{ki}-\bm{t}_i) = 0,
	\end{flalign}
	\vspace*{-20pt}
	\begin{flalign}\label{eqn:constr_ex}
		\bm{C}_i\left(\bm{m}_{0}^{l0}-\bm{m}_{0}^{k0}\right) - \left(\bm{m}_i^{li}-\bm{m}_i^{ki}\right) = \bm{0}, 
	\end{flalign}
	\vspace*{-20pt}
	\begin{flalign}
	 	\left(\bm{m}_{0}^{l0}-\bm{m}_{0}^{k0}\right) - \bm{C}_i^T\left(\bm{m}_i^{li}-\bm{m}_i^{ki}\right) = \bm{0}, 
	\end{flalign}
	\vspace*{-20pt}
	\begin{flalign}
	 	\left\Vert\bm{m}_i^{li}-\bm{m}_i^{ki}\right\Vert^2 - \left\Vert\bm{m}_j^{lj}-\bm{m}_j^{kj}\right\Vert^2 = 0, 
	\end{flalign}
	\vspace*{-20pt}
	\begin{flalign}
	 	\left(\bm{m}_{0}^{l0}-\bm{m}_{0}^{k0}\right)^{\times} \bm{C}_i - \bm{C}_i \left(\bm{m}_{0}^{l0}-\bm{m}_{0}^{k0}\right)^{\times} =\bm{0}, 
	\end{flalign}
	\vspace*{-20pt}
	\begin{flalign}
		\bm{t}_i - \left(\frac{1}{N_i}\sum_{i=1}^{N_i}\bm{C}_i \bm{m}^{k0}_0 - \bm{m}^{ki}_i\right) = \bm{0}, 
	\end{flalign}
	\vspace*{-10pt}
	\begin{flalign*}
		\hspace*{50pt}\forall i,j,l,k~ \mbox{s.t.}~ \left\{(i,l),(i,k),(j,l),(j,k)\right\}\subset\EdgeSet_m.\nonumber
	\end{flalign*}
\end{subequations}

Despite the fact that these constraints successfully tighten our problems for practical noise levels, the presence of redundant constraints prohibits the use of fast certification methods as mentioned above, meaning that interior-point methods must be used to certify or solve the relaxation. Therefore, it is very important to characterize the noise regime for which the convex relaxation of Problem \eqref{opt:landmark_SLAM} is still tight, which is the subject of Section~\ref{sec:NoiseAnalysis}.

Due to the large number of variables and constraints that must be introduced, the problem sizes that can be solved using this method are still small. This may be mitigated by exploiting the sparsity of the problem, but this remains as future work. One notable exception is the localization problem without relative-pose measurements, which is still reasonably tractable due to the separability of the problem.

\section{Estimation Uncertainty and the Dual Certificate}\label{sec:Uncertainty}

In this section, we present a set of theoretical results that allows us to extend intuitions about uncertainty in state estimation to tightness of SDP relaxations. We establish a connection between the dual solution of the SDP relaxation (specifically the certificate matrix, $\bm{H}$) and the Fisher Information Matrix. 

The following Lemma relates the \rev{certificate matrix to} the Laplace approximation of an equivalent (local) unconstrained problem of the form \eqref{opt:UnconstrainedMAP}. \rev{Throughout this section, we will assume that we have access to the global minimum and explore properties of its associated certificate matrix.}

\begin{lemma}\label{lem:FisherInfo}
	Suppose a given MAP estimation problem can be equivalently formulated as either a standard-form, homogenized QCQP \eqref{opt:QCQP} or as an unconstrained optimization as in \eqref{opt:UnconstrainedMAP}. Let $\bm{z}^*$ and $\bm{x}^*$ be the (global) optima of these formulations, respectively. Given a neighborhood, $\mathcal{U} \subseteq \mathbb{R}^p$, containing $\bm{x}^*$, let $\bm{\ell}: \mathbb{R}^p \mapsto \mathbb{R}^n$ be a \emph{smooth, injective} map on $\mathcal{U}$, such that its image is in the feasible set of \eqref{opt:QCQP}.\footnote{The existence of the mapping, $\bm{\ell}$, is not restrictive in our context. In fact, since the feasible set of the problems in this paper are smooth manifolds, this mapping can be interpreted as the \emph{inverse coordinate chart} from differential geometry.} Moreover, let the respective objective functions be equal under the map in the neighborhood $\mathcal{U}$. That is,
	\begin{equation}
		-\log\left(p(\bm{x} \vert \bm{\mathcal{D}})\right) = \bm{z}^T \bm{Q} \bm{z},
	\end{equation}
	for all $(\bm{x}, \bm{z})$ such that $\bm{z}=\bm{\ell}(\bm{x})$ and $\bm{x} \in \mathcal{U}$. Then the FIM of $p(\bm{x} \vert \bm{\mathcal{D}})$ is given by
	\begin{equation}\label{eqn:FIM_H}
		\bm{\Sigma}^{-1}= \bm{L}^T \bm{H}\bm{L},
	\end{equation}
	where $\bm{H}$ is the certificate matrix at the solution and $\bm{L}$ is the Jacobian of $\bm{\ell}(\bm{x})$ at the solution, $\bm{L} = \frac{d}{d\bm{x}}\bm{\ell}(\bm{x})\vert_{\bm{x}=\bm{x}^*}$. 
\end{lemma}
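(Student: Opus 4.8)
The plan is to identify the FIM with the Hessian of the negative log-posterior $f(\bm{x}) := -\log(p(\bm{x}\vert\bm{\mathcal{D}}))$ at $\bm{x}^*$ and to evaluate that Hessian by restricting the \emph{Lagrangian-augmented} cost to the image of $\bm{\ell}$. The crucial preliminary observation is that, although the certificate matrix $\bm{H} = \bm{H}(\bm{\lambda}^*,\rho^*)$ from \eqref{opt:Dual} differs from $\bm{Q}$ by a combination of constraint matrices, that combination is \emph{constant} along the manifold traced out by $\bm{\ell}$. Indeed, for every $\bm{x}\in\mathcal{U}$ the point $\bm{z}=\bm{\ell}(\bm{x})$ lies in the feasible set of \eqref{opt:QCQP}, so $\bm{z}^T\bm{A}_0\bm{z}=1$ and $\bm{z}^T\bm{A}_i\bm{z}=0$ for all $i$. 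Substituting into the definition of $\bm{H}$ yields
\begin{equation*}
	\bm{\ell}(\bm{x})^T\bm{H}\bm{\ell}(\bm{x}) = \bm{\ell}(\bm{x})^T\bm{Q}\bm{\ell}(\bm{x}) + \rho^* = f(\bm{x}) + \rho^*,
\end{equation*}
where the last step uses the hypothesis that the two objectives agree under the map. Since $\rho^*$ is constant, the function $h(\bm{x}) := \bm{\ell}(\bm{x})^T\bm{H}\bm{\ell}(\bm{x})$ and $f$ share the same Hessian throughout $\mathcal{U}$, so it suffices to differentiate $h$ twice at $\bm{x}^*$.

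Applying the second-order chain rule to $h$ produces exactly two contributions,
\begin{equation*}
	\frac{\partial^2 h}{\partial\bm{x}^2} = \bm{L}^T\!\left(\frac{\partial^2 (\bm{z}^T\bm{H}\bm{z})}{\partial\bm{z}^2}\right)\!\bm{L} + \sum_a \left(\frac{\partial (\bm{z}^T\bm{H}\bm{z})}{\partial z_a}\right)\frac{\partial^2 \ell_a}{\partial\bm{x}^2},
\end{equation*}
namely a Gauss--Newton-type term proportional to $\bm{L}^T\bm{H}\bm{L}$ and a curvature term that contracts the second derivatives of $\bm{\ell}$ against the gradient $\partial(\bm{z}^T\bm{H}\bm{z})/\partial\bm{z} = 2\bm{H}\bm{z}$. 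The crux of the argument --- and the step I expect to be the main obstacle --- is showing that this curvature term vanishes at the optimum; this is precisely where the \emph{certificate} property becomes indispensable, since evaluating at $\bm{x}^*$ reduces the gradient to $2\bm{H}\bm{z}^*$ and $\bm{z}^*$ lies in the nullspace of the certificate matrix.

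To close this step I would establish $\bm{H}\bm{z}^*=\bm{0}$ directly: the Lagrangian of \eqref{opt:QCQP} is $\bm{z}^T\bm{H}(\bm{\lambda},\rho)\bm{z}-\rho$, and first-order stationarity in $\bm{z}$ at the global optimum forces $\bm{H}\bm{z}^*=\bm{0}$; equivalently, complementary slackness between \eqref{opt:SDP} and \eqref{opt:Dual} with a rank-one $\bm{Z}^*=\bm{z}^*\bm{z}^{*T}$ gives $\bm{H}^*\bm{z}^*=\bm{0}$. With the curvature term annihilated, only the Gauss--Newton term survives and we recover \eqref{eqn:FIM_H}. The remaining bookkeeping is the scalar factor produced by differentiating the quadratic form twice (which contributes the leading $2\bm{H}$); this is reconciled by the standard convention relating the Mahalanobis cost to the Gaussian negative log-likelihood through the factor $\tfrac12$, which fixes the leading coefficient at unity. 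Finally, I would note that the smoothness and injectivity of $\bm{\ell}$ make $\bm{L}$ full column rank at $\bm{x}^*$, so that $\bm{L}^T\bm{H}\bm{L}$ is positive (semi)definite, consistent with its role as an inverse covariance.
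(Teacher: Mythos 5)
Your proposal is correct and follows essentially the same route as the paper: the paper likewise observes that on the feasible image of $\bm{\ell}$ the Lagrangian $\bm{z}^T\bm{H}\bm{z}-\rho^*$ coincides with the objective, performs a second-order expansion about $\bm{z}^*$, and uses the first-order stationarity condition $\bm{H}\bm{z}^*=\bm{0}$ to annihilate the term that would otherwise pick up the second derivatives of $\bm{\ell}$, leaving exactly the Gauss--Newton term $\bm{L}^T\bm{H}\bm{L}$. Your explicit chain-rule bookkeeping of the curvature term (and your frank handling of the factor of two via the $\tfrac{1}{2}$ Mahalanobis convention, which the paper glosses over) is a slightly more careful presentation of the identical argument.
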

We defer the proof of this Lemma to Appendix~\ref{App:lemma1Proof}. Intuitively, the Lemma uses the fact that the local curvature of the two problems are equal on the feasible set to establish a connection between the FIM and the certificate matrix. 

One implication of this Lemma is that the certificate matrix can be interpreted as an information matrix in the higher-dimensional, SDP space. It also provides a method to extract posterior covariance matrices from a given SDP solution. 

\rev{\subsection{Application to Localization}\label{sec:fim-wahba-ex}

To make the Lemma in the preceding section more concrete,  we show its application to Problem~\eqref{opt:Localize}. The equivalent unconstrained parameterization of this problem is the Lie-algebra vector space parameterization (see \cite{barfoot2011state,dellaertShonanRotationAveraging2020} for more details on Lie groups in robotics).

For the $i^{th}$ pose variable in Problem~\eqref{opt:Localize}, let $\bm{x}_i= \begin{bmatrix} \bm{x}_{i,r}^T & \bm{x}_{i,t}^T\end{bmatrix}^T \in \mathbb{R}^3\times\mathbb{R}^3$ be the associated variable for the unconstrained problem. We have the following relationships:
\begin{equation}
	\bm{C}_i = \exp(\bm{x}_{i,r}^\wedge)\bar{\bm{C}}_i, \quad \bm{t}_i = \bm{x}_{i,t} + \bar{\bm{t}}_i,
\end{equation} 
where $\bar{\bm{C}}_i$ and $\bar{\bm{t}}_i$ are the optimal values of $\bm{C}_i$ and $\bm{t}_i$, respectively, for a given problem instance, $\exp$ is the matrix exponential and $^\wedge$ is the skew-symmetric operator (see~\cite{barfoot2011state} for details). We can rewrite Problem~\eqref{opt:Localize} as the unconstrained problem,
\begin{equation}
	\label{opt:LocalizeUnc}
	\begin{array}{rl}
		\min\limits_{\bm{x}_i \in \mathbb{R}^6} &\sum\limits_{(i,k)\in\EdgeSet_m} \bm{e}_{ik}^T \bm{W}_{ik} \bm{e}_{ik} \\
		\mbox{where} & \bm{e}_{ik} = \tilde{\bm{m}}_i^{ki} - \exp(\bm{x}_{i,r}^\wedge)\bar{\bm{C}}_i\bm{m}_{0}^{k0} + \bm{x}_{i,t} + \bar{\bm{t}}_i,
	\end{array}
\end{equation}
From \eqref{eqn:locVar}, we see that the solution mapping is given by
\begin{gather}
	\bm{z}^T=\begin{bmatrix}\bm{z}^T_1&\cdots& \bm{z}^T_n&1\end{bmatrix} \\
	\bm{z}_i=\bm{\ell}(\bm{x}_i) = \begin{bmatrix}
		\vect{\exp(\bm{x}_{i,r}^\wedge)\bar{\bm{C}}_i}\\ \bm{x}_{i,t} + \bar{\bm{t}}_i
	\end{bmatrix}.
\end{gather}
By construction, this map is bijective with its image in a neighborhood around $(\bar{\bm{C}}_i, \bar{\bm{t}}_i)$ and the cost functions of Problems~\eqref{opt:Localize} and~\eqref{opt:LocalizeUnc} are equal under the map. Finally, the map is smooth by the properties of Lie groups and its Jacobian is given by
\begin{equation}
	\bm{L} = \mbox{diag}(\bm{L}_1,\dots,\bm{L}_n,0), \quad
	\bm{L}_i = \begin{bmatrix}
		(\bm{I}\otimes\bar{\bm{C}}_i) \bar{\bm{G}}_d & \bm{0}\\
		\bm{0} & \bm{I}
	\end{bmatrix},
\end{equation}
where $\bar{\bm{G}}_d$ is a matrix of the vectorized generators of the $\mbox{SO}(3)$ Lie algebra (see equations 7-10 of~\cite{dellaertShonanRotationAveraging2020} for a detailed derivation of the top-left block) and $\mbox{diag}()$ denotes block-diagonal concatenation. In this example, it can be shown that the minimum singular value of $\bm{L}$ is exactly unity and, if applying Lemma~\ref{lem:FisherInfo}, we have that the minimum eigenvalue of the parameterized FIM exactly upper bounds the minimum eigenvalue of the certificate matrix.

We visualize Lemma~\ref{lem:FisherInfo} on a two-pose localization problem with stereo measurements by comparing the numerical covariance matrix (obtained by running 10,000 optimization trials with different sampled noise values) with the theoretical covariance matrix (inverse of FIM of the final sample). The matrices match to numerical precision and can be seen in Figure~\ref{fig:num_cov}. 
\begin{figure}[!ht]
	\centering
	\includegraphics[width=\columnwidth]{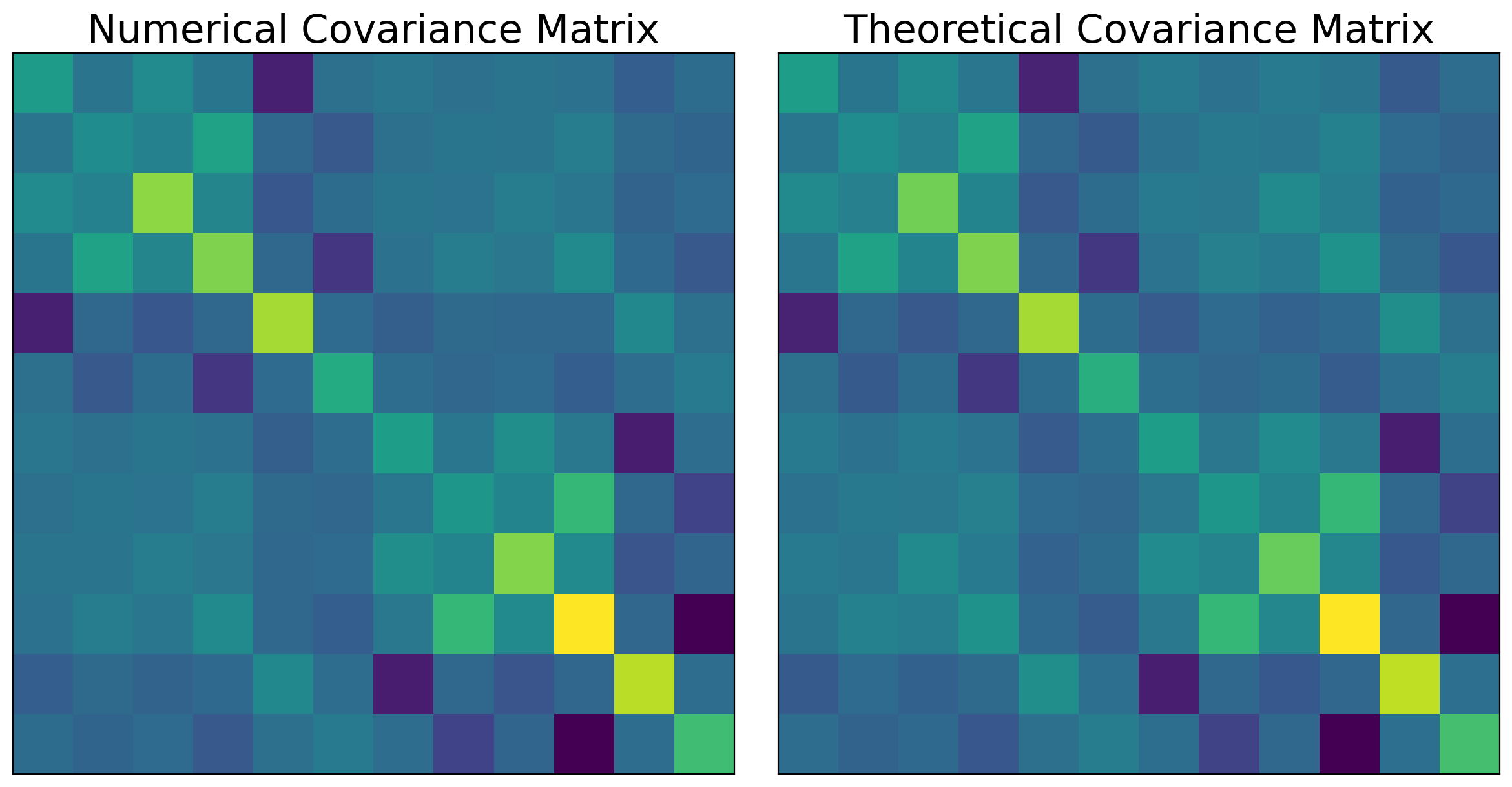}
	\caption{\rev{Comparison of numerical covariance matrix to the theoretical covariance matrix from Lemma~\ref{lem:FisherInfo}, $\bm{\Sigma} = (\bm{L}^T\bm{H}\bm{L})^{-1}$, for a two-pose, stereo localization problem, Problem~\ref{opt:Localize}. Numerical covariance was found by considering the sample covariance of 10,000 (globally optimal) estimates. Note that the covariance matrix is fully coupled because we have included a relative-pose measurement.}}
	\label{fig:num_cov}
\end{figure}
}
\rev{\subsection{Connection to Relaxation Tightness}

In this section, we aim to connect posterior estimation uncertainty to tightness of the SDP relaxation. To this end, we present a Lemma that relates the minimum eigenvalues of the certificate matrix and the FIM.
\begin{proposition}\label{prop:eig_bounds}
	Assume that the setting of Lemma~\ref{lem:FisherInfo} applies. Without loss of generality, we assume that the homogenizing variable is the last element of $\bm{z}$. Then the certificate matrix can be partitioned as,
	\begin{equation*}
		\bm{H}=\begin{bmatrix}
			\bar{\bm{H}} & \bm{h} \\ \bm{h}^T & h
		\end{bmatrix},
	\end{equation*}
	and we have the following relation:
	\begin{equation}\label{eqn:eig_bound}
		\sigma_{\min}(\bar{\bm{H}})  \leq \frac{\sigma_{\min}(\bm{\Sigma}^{-1})}{\rev{s_{\min}(\bm{L})^2}},
	\end{equation}
	\rev{where $\sigma_{\min}(\cdot)$ denotes the minimum eigenvalue and $s_{\min}(\cdot)$ denotes the smallest singular value.}
\end{proposition}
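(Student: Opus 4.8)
The plan is to combine the identity $\bm{\Sigma}^{-1} = \bm{L}^T \bm{H} \bm{L}$ supplied by Lemma~\ref{lem:FisherInfo} with two structural facts: that $\bm{H}\succeq\bm{0}$ by dual feasibility in \eqref{opt:Dual}, and that the homogenizing coordinate is held fixed along $\bm{\ell}$. First I would observe that, since the last entry of $\bm{z}$ is the homogenizing variable, which is pinned to the constant value $w=1$ on the feasible set (choosing the branch consistent with $\bm{z}^*$), the last component of $\bm{\ell}(\bm{x})$ is constant and its derivative vanishes. Hence the last row of $\bm{L}$ is identically zero, and writing $\bm{L} = \begin{bmatrix}\bar{\bm{L}}^T & \bm{0}\end{bmatrix}^T$ yields two consequences: every column of $\bm{L}$ lies in the coordinate hyperplane $\{\bm{w}\in\mathbb{R}^n : w_n = 0\}$, and $s_{\min}(\bm{L}) = s_{\min}(\bar{\bm{L}})$.

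Next I would invoke the variational (Rayleigh-quotient) characterization of the minimum eigenvalue. Set $\mu = \min_{\bm{0}\neq\bm{w}\in\mathrm{col}(\bm{L})} \frac{\bm{w}^T\bm{H}\bm{w}}{\|\bm{w}\|^2}$, the smallest eigenvalue of $\bm{H}$ restricted to the column space of $\bm{L}$. Because $\bar{\bm{H}}$ is exactly the compression of $\bm{H}$ onto the subspace $\{w_n=0\}$, we have $\sigma_{\min}(\bar{\bm{H}}) = \min_{\bm{0}\neq\bm{w},\, w_n=0}\frac{\bm{w}^T\bm{H}\bm{w}}{\|\bm{w}\|^2}$. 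Since $\mathrm{col}(\bm{L}) \subseteq \{w_n=0\}$, minimizing over the smaller set can only increase the value, giving $\sigma_{\min}(\bar{\bm{H}}) \leq \mu$.

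It then remains to bound $\mu$ by the right-hand side of \eqref{eqn:eig_bound}. Substituting $\bm{w}=\bm{L}\bm{v}$ into the quadratic form for $\bm{\Sigma}^{-1}=\bm{L}^T\bm{H}\bm{L}$ and using $\bm{H}\succeq\bm{0}$, I would bound, for every unit vector $\bm{v}$, the quantity $(\bm{L}\bm{v})^T\bm{H}(\bm{L}\bm{v}) \geq \mu\,\|\bm{L}\bm{v}\|^2 \geq \mu\, s_{\min}(\bm{L})^2$, where the first inequality is the definition of $\mu$ (valid since $\bm{L}\bm{v}\in\mathrm{col}(\bm{L})$) and the second uses $\|\bm{L}\bm{v}\|\geq s_{\min}(\bm{L})\|\bm{v}\|$. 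Taking the minimum over $\|\bm{v}\|=1$ gives $\sigma_{\min}(\bm{\Sigma}^{-1}) \geq \mu\,s_{\min}(\bm{L})^2$; dividing through (here $s_{\min}(\bm{L})>0$ since $\bm{\ell}$ is an immersion, so $\bm{L}$ has full column rank) and chaining with the previous paragraph yields $\sigma_{\min}(\bar{\bm{H}}) \leq \mu \leq \sigma_{\min}(\bm{\Sigma}^{-1})/s_{\min}(\bm{L})^2$, as claimed.

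The main obstacle is nonnegativity bookkeeping rather than any difficult inequality: the step $\mu\,\|\bm{L}\bm{v}\|^2 \geq \mu\, s_{\min}(\bm{L})^2$ requires $\mu\geq 0$, which is precisely where positive semidefiniteness of the certificate matrix is used, and the clean comparison $\sigma_{\min}(\bar{\bm{H}})\le\mu$ hinges on the subspace containment $\mathrm{col}(\bm{L})\subseteq\{w_n=0\}$. I would therefore make sure to establish the vanishing last row of $\bm{L}$ (via constancy of the homogenizing coordinate on the image of $\bm{\ell}$) before applying the subspace-restriction argument, since that containment is the crux that converts the eigenvalue of the submatrix $\bar{\bm{H}}$ into a bound expressed through the FIM and the Jacobian $\bm{L}$.
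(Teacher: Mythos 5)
Your proof is essentially the paper's proof: both arguments rest on the vanishing last row of $\bm{L}$ (so that $\bm{\Sigma}^{-1}=\bar{\bm{L}}^T\bar{\bm{H}}\bar{\bm{L}}$ and the relevant subspace sits inside $\{w_n=0\}$), the Rayleigh-quotient characterization of $\sigma_{\min}(\bar{\bm{H}})$ restricted to $\mathrm{col}(\bar{\bm{L}})$, and the bound $\Vert\bar{\bm{L}}\bm{v}\Vert\geq s_{\min}(\bar{\bm{L}})\Vert\bm{v}\Vert$ from injectivity of $\bm{\ell}$. The one weak link is your justification of $\mu\geq 0$: you invoke $\bm{H}\succeq\bm{0}$ via dual feasibility, but in the setting of Lemma~\ref{lem:FisherInfo} the certificate matrix is simply the Lagrangian Hessian at the global QCQP solution and need not be positive semidefinite --- indeed the paper's companion result in Appendix~\ref{app:fim-degeneracy} explicitly treats the case where $\bm{H}$ has a negative eigenvalue, and the whole point of Proposition~\ref{prop:eig_bounds} is to diagnose exactly such failures of tightness, so assuming $\bm{H}\succeq\bm{0}$ would gut the result. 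Fortunately the fix is immediate and is what the paper implicitly does: the nonnegativity you need is only that of the quadratic form on $\mathrm{col}(\bm{L})$, and $\bm{v}^T\bm{L}^T\bm{H}\bm{L}\bm{v}=\bm{v}^T\bm{\Sigma}^{-1}\bm{v}\geq 0$ because the FIM is the Hessian of $-\log p(\bm{x}\vert\bm{\mathcal{D}})$ at the global minimum of the unconstrained problem, hence positive semidefinite regardless of whether $\bm{H}$ is. With that substitution your argument goes through verbatim.
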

The proof of this lemma is given in Appendix \ref{App:lem2Proof}. It is known that high posterior uncertainty along particular dimensions of state-estimation problems leads to degenerate (or nearly degenerate) FIMs\cite{zhangDegeneracyOptimizationbasedState2016}. If strict complementarity holds for the SDP,\footnote{\rev{Strict complementarity is a property of SDPs that has been shown to hold \emph{generically}~\cite{alizadehComplementarityNondegeneracySemidefinite1997}. However, this property has been shown to fail in highly structured SDPs, such as relaxations of some combinatorial problems\cite{cifuentesLocalStabilitySemidefinite2022,silvaStrictComplementarityMaxCut2018}. Nevertheless, we have empirically observed that this property holds for the problems we consider here.}} then as the FIM approaches degeneracy (i.e., its minimum eigenvalue approaches zero) it can be shown (c.f. Appendix~\ref{app:fim-degeneracy}) that either the certificate matrix has a negative eigenvalue at the global solution or the primal SDP solution is not rank-one. In either case, the practical implication in such a scenario is that tightness of the SDP relaxation is lost (regardless of the presence of redundant constraints). 

Our investigations in the next section imply that posterior uncertainty plays a role even when the FIM is \emph{approximately} degenerate (i.e., scenarios with high or concentrated noise), but investigation of the theoretical mechanism explaining this phenomenon is still ongoing. For example, Figure \ref{fig:cert_eig_study} shows that the minimum eigenvalue of the FIM generally upper bounds the second eigenvalue of the certificate matrix across parameters for Wahba's problem.} 

We will see in next section that the tightness of the semidefinite relaxation degrades as the uncertainty of the optimal state estimate increases (i.e., as the FIM becomes more degenerate). In particular, the introduction of anisotropic noise can cause accentuated uncertainty in a particular direction and erode the tightness of the semidefinite relaxation.

\section{Simulated Experiments}\label{sec:Simulations}

It is known that the tightness of SDP relaxations of least-squares perception problems depends on the level of noise in the measurements~\cite{brialesConvexGlobal3D2017, rosenSESyncCertifiablyCorrect2019, cifuentesLocalStabilitySemidefinite2022}. For the case of rotational averaging, tightness of the relaxation has been linked to the magnitude of \textit{residual uncertainty} of pose estimates~\cite{erikssonRotationAveragingStrong2018}.

In this section, we empirically explore the effect that introducing anisotropic noise and matrix weights have on localization and SLAM. Our study is strongly motivated by stereo-camera noise models, but not limited thereto. 
All of the results in this and the next section were generated using MOSEK's interior-point, SDP solver~\cite{mosek}.

It is common in the literature report average rank when assessing tightness of the SDP relaxation, but we find that the eigenvalue  ratio (ER) of the optimal solution  -- that is, the ratio between the first and second eigenvalues -- is a more informative metric for tightness. Generally, $\mbox{ER}\geq 10^6$ is an appropriate indicator that an SDP relaxation is rank-one and therefore tight. In the subsequent analysis, we use this metric as the main criterion for tightness. 

As mentioned above, when measurements are based on a stereo-camera model, the uncertainty ellipsoids in Euclidean space become elongated. This elongation occurs along rays extending from the camera focal point and depends quadratically on the distance to the measured point. To capture this elongation, we define the \textit{anisotropicity} of a measurement as the square root of the conditioning number of the noise covariance matrix (i.e., square root of ratio of maximum to minimum eigenvalues). Figure~\ref{fig:ellipsoid_align}(a) presents a visualization of the shape of an uncertainty ellipsoid as anisotropicity changes.

All of the studies in this section have the same general format. Landmark locations were randomly generated from a uniform distribution within a bounding cube of a given size and distance relative to the pose/camera frame. Unless otherwise specified, the default values for distance and bounding cube length are 3 m and 1 m, respectively. 

We searched for sets of parameters that have tight relaxations through a variety of analyses. For each analysis, the parameter space was sampled using a 30-by-30 grid in logspace across the ranges shown in the figures. For each point in the parameter space, the SDP was solved for 100 random landmark and pose geometries generated using random seeds that were consistent across parameter points. For each point in parameter space, the minimum ER was found across all trials and boundaries were plotted along the parameter values at which the minimum ER dropped below $1\times10^{-6}$.\footnote{Note that in reality, the raw contours are quite noisy. For convenience to the reader, we first smooth the minimum ER values with a median filter, then plot the contours. The smoothing method is reviewed Appendix~\ref{App:Smoothing}.}

\subsection{Anisotropic Noise in Wahba's Problem}\label{sec:NoiseAnalysis}

In this section, we directly control the level and alignment of measurement noise anisotropicity (i.e., uncertainty ellipoid size and shape) and observe their effect on the tightness of Wahba's problem. 

\subsubsection{Aligned Uncertainty Ellipsoids}

We first study the effect of anisotropic noise  on Wahba's problem and consider the simplified case in which all of the error ellipsoids are aligned. The alignment of the ellipsoids causes the uncertainty to be concentrated in a single direction leading to high posterior uncertainty in that direction~\cite{zhangDegeneracyOptimizationbasedState2016}. This case is of interest since our analysis in Section~\ref{sec:Uncertainty} suggests that there is a connection between high posterior uncertainty and loss of tightness. 

Figure~\ref{fig:ellipsoid_align} shows the boundary between tight and non-tight SDP relaxations for this problem as the standard deviation (STD) of the noise, anisotropicity of the noise  and number of landmarks are varied. The problem setup is shown in Figure~\ref{fig:ellipsoid_align}(b), while Figure~\ref{fig:ellipsoid_align}(a) demonstrates the effect of increasing anisotropicity on the uncertainty ellipsoid.

Figure~\ref{fig:ellipsoid_align} (c) and (d) show the parameter-space regions that have tight relaxations without and with (respectively) the redundant constraints in \eqref{eqn:so3_constraints}. For the case without redundant constraints, we see that when anisotropicity is close to one (i.e., nearly isotropic), the noise level that yields tight relaxations is high even for low numbers of landmarks. These results are consistent with previous results for isotropic noise~\cite{holmesEfficientGlobalOptimality2023}. Increasing the number of landmarks generally improves tightness for a given noise level. We also see that for any given level of noise, increasing anisotropicity eventually results in a loss of tightness. 

\rev{We observe} that the redundant constraints effectively make the problem tight across almost all parameters studied. The effect of the redundant constraints as well as the relationship between tightness, the certificate matrix and the FIM are explored in greater depth in Figure~\ref{fig:cert_eig_study}. This figure shows heatmaps for the 50 landmark case of Figure~\ref{fig:ellipsoid_align}, without and with redundant constraints, across three metrics: the solution ER (subplots (a) and (b)), second smallest eigenvalue of the certificate matrix (subplots (c) and (d)), and the minimum eigenvalue of the FIM (\rev{subplots} (e) and (f)). The FIM was calculated using \eqref{eqn:FIM_H} with the mapping $\bm{L}$ as defined in \rev{Section \ref{sec:fim-wahba-ex}}. 

A number of key observations can be drawn. First, we note that the tightness boundary in (a) (magenta line)\footnote{Note that this line is the same as the same boundary as shown in Figure~\ref{fig:ellipsoid_align}(c).} coincides almost exactly with the drop in the certificate matrix eigenvalue in (c), as expected.\footnote{This actually demonstrates that strict complementarity mostly holds for this problem, since a drop in rank of the certificate is exactly complemented by an increase in rank of the solution matrix.} 

Second, by comparing (c), (d) and (f), we see that the introduction of redundant constraints increases the certificate eigenvalue to close to the FIM upper bound (though it does not quite attain it). It is interesting to note that the bound in \rev{Proposition}~\ref{prop:eig_bounds} is quite loose without redundant constraints. 

Finally, we note that the FIM changes very little when redundant constraints are introduced. This is expected, since the FIM for a given problem should not depend on the redundant constraints. We posit that slight differences that we observe between plots (e) and (f) are numerical in nature. 

\begin{figure}[!t]
	\centering
	\includegraphics[width=\columnwidth]{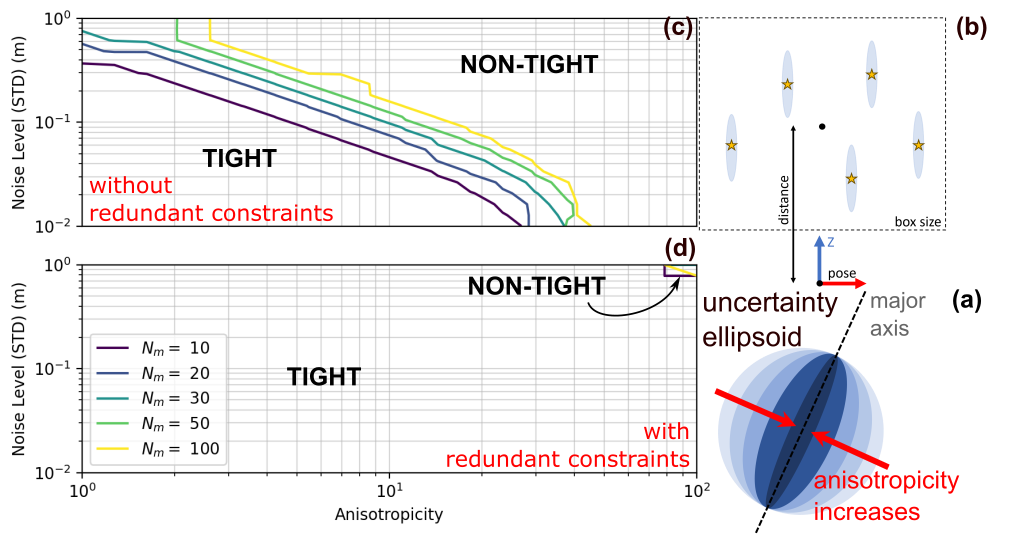}
	\caption{Investigation of the effect of anisotropicity on tightness of semidefinite relaxations for Wahba's problem.  Subplot (a) shows how varying anisotropicity affects the uncertainty ellipsoid. Subplot (b) shows the problem setup, with ellipsoids aligned to the $z$-axis of the pose. Subplot (c) shows the tightness boundary for varying numbers of landmarks. Anisotropicity decreases the tightness boundary while increasing number of observed landmarks increases the boundary.}
	\label{fig:ellipsoid_align}
\end{figure}

\begin{figure}[!t]
	\centering
	\includegraphics[width=\columnwidth]{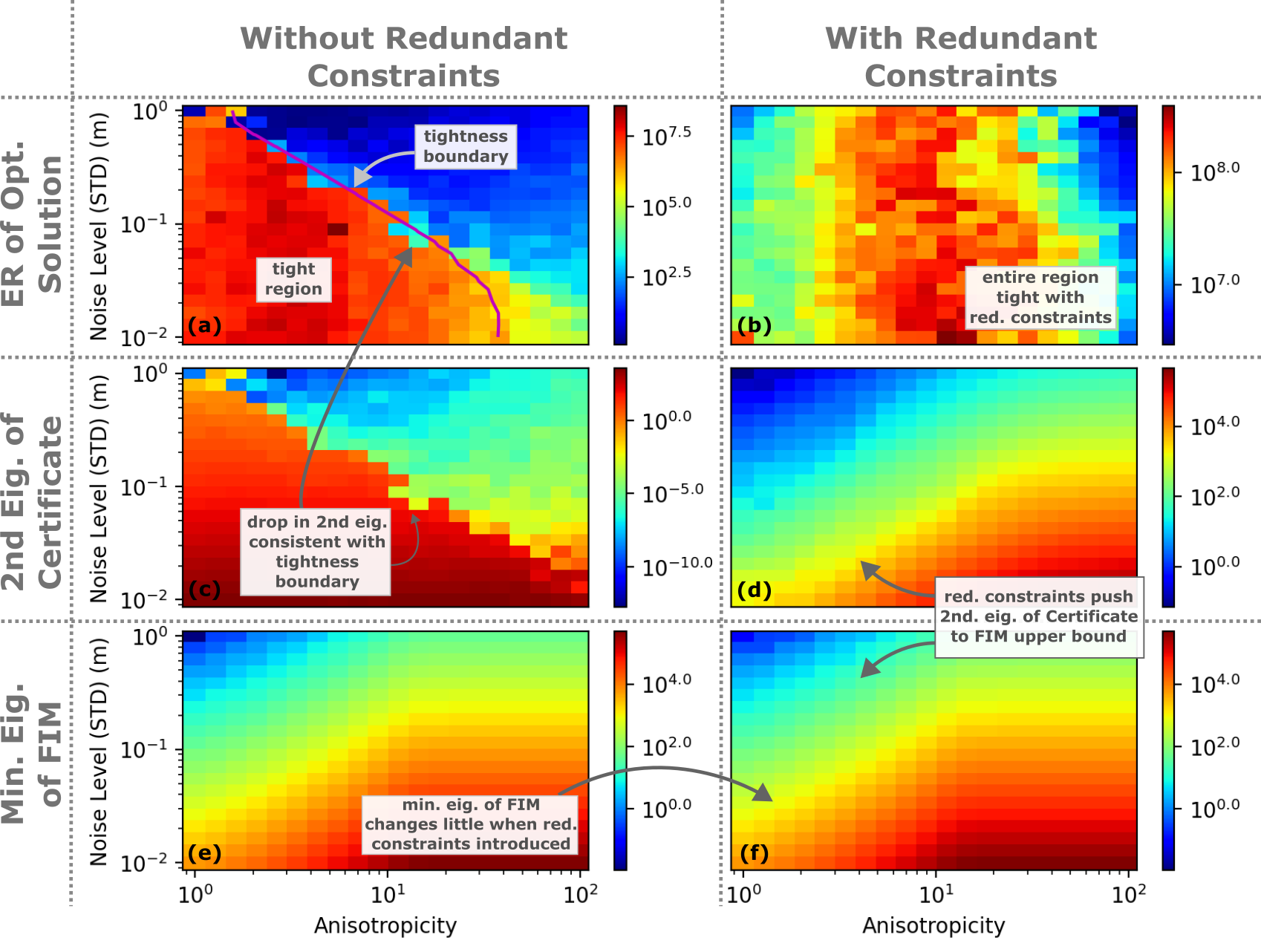}
	\caption{Extended study of the results in Figure~\ref{fig:ellipsoid_align} for the 50 landmark case. Heatmaps represent the value of each metric specified in the left-most column, both without (left column) and with (right column) redundant constraints. The presence of the redundant constraints pushes the second certificate eigenvalue closer to the minimum eigenvalue of the FIM, leading to a relaxation that is tight over a larger region of the parameter space. Note that there is a significant heatmap scale difference between (a) and (b) and between (c) and (d).}
	\label{fig:cert_eig_study}
\end{figure}

\subsubsection{Misaligned Uncertainty Ellipsoids}\label{sec:SimMisalign}

We now consider the effect of varying the \emph{orientation} of the uncertainty ellipsoids. In classical state estimation, it is known that high-quality estimates can still be obtained when measurements have high uncertainty in a particular direction, as long as these directions are not aligned. Leveraging our insight from Section~\ref{sec:Uncertainty}, we expect to see a similar trend with the tightness of the SDP relaxation.
 
Figure~\ref{fig:wahba_axis_bnd} demonstrates two experiments that vary uncertainty ellipsoid orientation in the same setting as for Figure~\ref{fig:ellipsoid_align}. Figure~\ref{fig:wahba_axis_bnd}(a)  shows results for random rotational perturbation of ellipsoid alignment while Figure~\ref{fig:wahba_axis_bnd}(b)  shows the effect of the `fanning out' of ray-aligned ellipsoids as the size of the landmark bounding cube is increased. 
\begin{figure}[!b]
	\centering
	\includegraphics[width=\columnwidth]{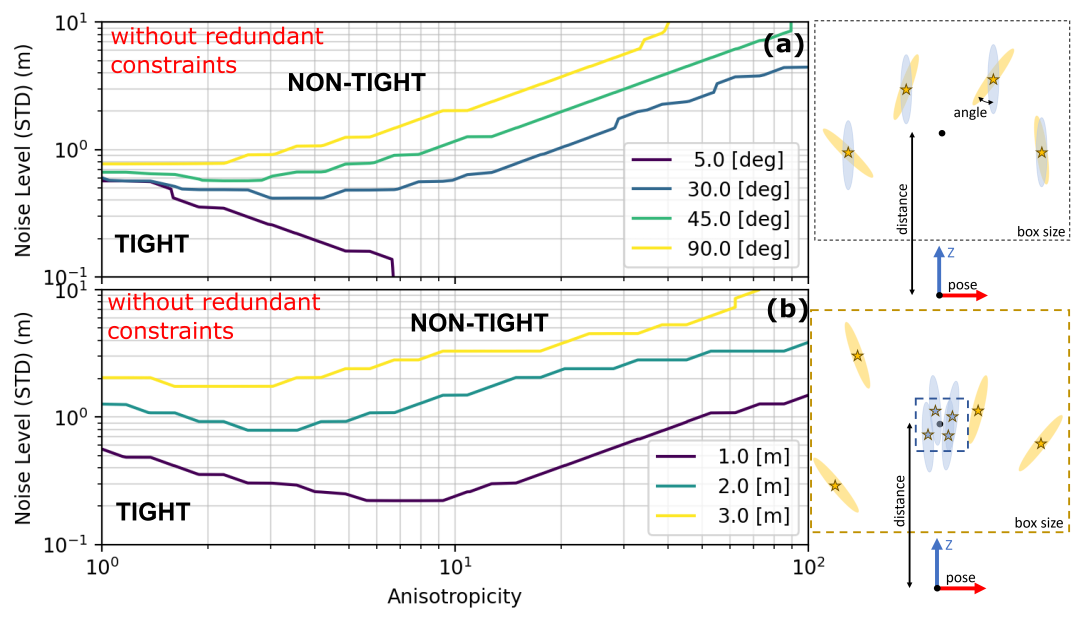}
	\caption{Tightness boundary for Wahba's problem instance with 20 landmarks observed. In (a), axes of maximum uncertainty are aligned to pose $z$-axis then perturbed by random angles with different standard deviations (see legend). In (b), maximum uncertainty axis is aligned to pose-landmark ray and size of the bounding box on the landmarks is varied (size in legend). In both cases, increased variation of uncertainty ellipsoid axes yields tighter relaxations as anisotropicity increases. Additionally, in both cases, the entire parameter space was tight when redundant constraints were used.}
	\label{fig:wahba_axis_bnd}
\end{figure}

In both cases, the trends coincide with our expectation: when the ellipsoids are \textit{aligned}, uncertainty is concentrated on a single axis and tightness is lost at a lower noise level. On the other hand, when the ellipsoids are highly \textit{misaligned}, uncertainty is not concentrated and the problems are tighter for much higher noise levels. 

Another interesting trend that we observe is that in most cases the tightness boundary begins to increase along the noise-level axis as anisotropicity increases. Because the axes are misaligned, high uncertainty of one measurement along a given direction, can be complemented by low certainty of a different measurement along the same direction. We posit that this causes the posterior uncertainty (and hence tightness) to be dominated by ellipsoid minor-axis uncertainty, which decreases as anistropicity increases. 

We did not include the results for Figure~\ref{fig:wahba_axis_bnd} with redundant constraints because all of the studied parameter values were tight.

\subsection{Wahba's Problem with Stereo-Camera Noise Model}\label{sec:SimWahbaStereo}

Levaraging our understanding from the previous section, we now turn our attention to Wahba's problem with measurements obtained from a less idealized, stereo-camera-based noise model. Both the measurement noise level and anisotropicity depend strongly on the distance between a pose and the observed landmarks (see \eqref{eqn:stereo_jac} at the end of Appendix~\ref{App:stereo}). Additionally, posterior uncertainty in stereo-camera-based localization depends on the `spread' of the landmarks being measured. As such, our experiments in this section explore tightness as a function of the distance to and size of the landmark bounding cube (see Figure~\ref{fig:stereo_redun}(a) for problem setup). The measurements and their associated covariances are drawn from the stereo-camera model with parameters given in Table~\ref{tbl:cam_params}.

Figure~\ref{fig:stereo_redun} (b) and (c) show the tightness boundaries for this setup without and with the redundant constraints given in \eqref{eqn:so3_constraints}. 
The figure shows results for Wahba's problem with different numbers of landmarks. As expected, the level of noise for which the relaxation is tight is inversely proportional to the average distance between pose and landmarks. On the other hand, increasing the spread (bounding cube) of the landmarks increases the tightness boundary, which is consistent with our previous observations for the ellipsoid model. 

Note that for this camera model, Wahba's problem is not very tight without redundant constraints: the tightness boundary when 50 landmarks are constrained to a unit cube is a range of approximately 2 m for a 0.24 m baseline camera. In constrast, the problem becomes considerably tighter when the redundant constraints are employed, with the maximum range of 100 m. This \rev{indicates} that redundant constraints are required for practical robotics applications. As we will see in Section~\ref{sec:OutdoorLoc}, it is feasible to use redundant constraints in this problem, while still maintaining computational efficiency. 

\begin{figure}[!t]
	\centering
	\includegraphics[width=\columnwidth]{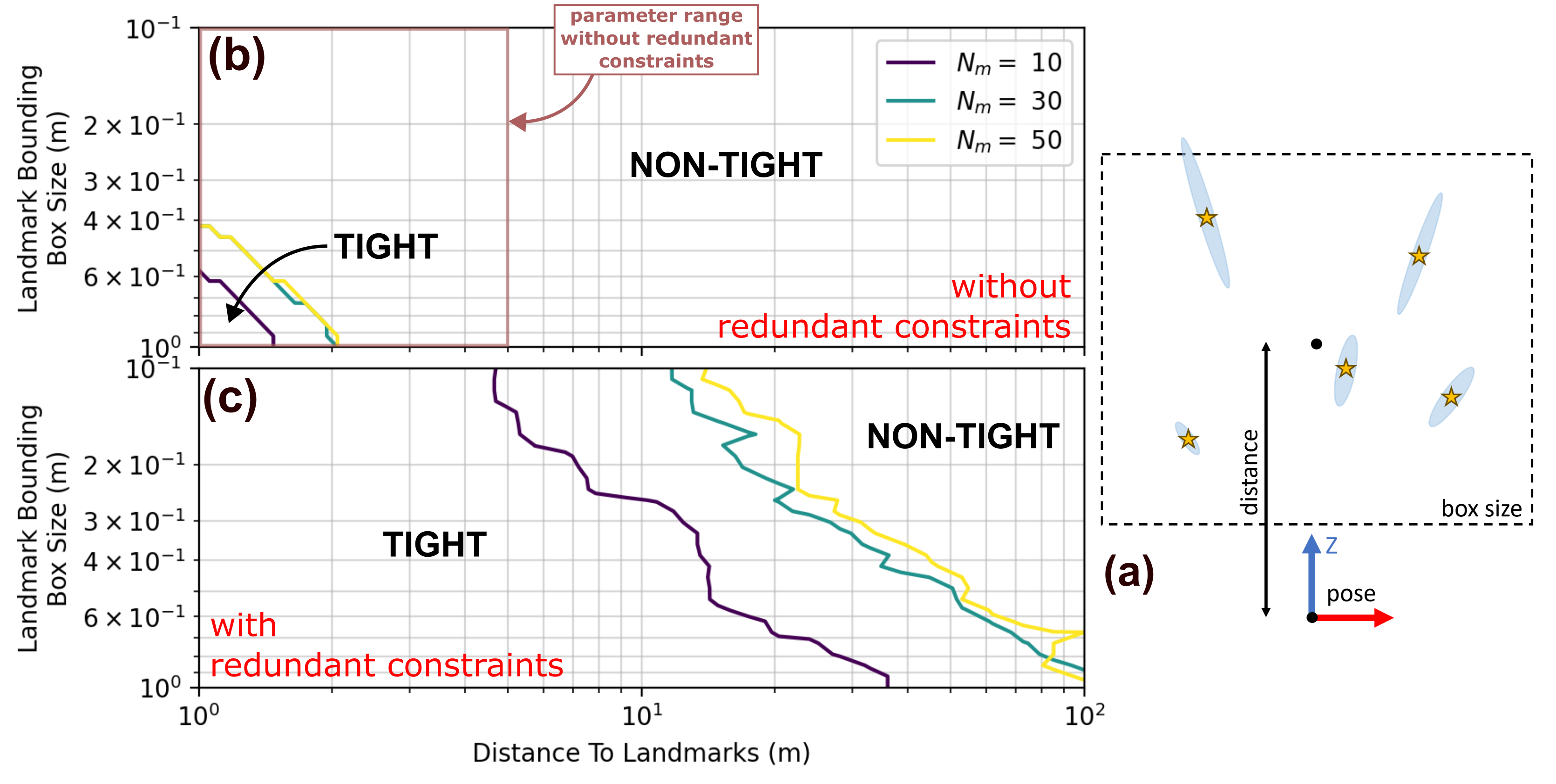}
	\caption{Investigation of tightness boundaries for Wahba's problem with a stereo-camera model without and with redundant constraints. A diagram of experiment setup is shown in (a). (b) shows the effect of increasing the number of landmarks without redundant constraints. (c) shows the boundaries for the same model and scenario, but with redundant constraints included. Note that the sampled distance parameter ranges are different between (a) and (b).}
	\label{fig:stereo_redun}
\end{figure}

\subsection{SLAM with Stereo-Camera Noise Model}\label{sec:SimStereoSLAM}

In Figure~\ref{fig:stereo_slam_box}, \rev{we show the effect of applying redundant constraints to a simplified SLAM problem} using the same simulation setup as in Figure~\ref{fig:stereo_redun}\rev{, with 20 landmarks and the landmark bounding box set to 1 m. We see that the application of the redundant constraints causes a considerable increase in the tightness (ER) of the problem in many instances. As in the localization results, tightness is eroded as the landmarks move further from the robot pose. We also observed that tightness is proportional to the number of landmarks observed.}

We note that although the redundant constraints are capable of tightening the problem, the runtime of these problems is prohibitive for real-time application. This is due to the size of the problem: for a single pose problem with 20 landmarks the dimension of the SDP variable is $133$ and the number of constraints required is $3343$. \rev{Even for a small SLAM problem such as this, the SDP runtime is approximately five seconds and} these numbers also increase quite rapidly as the number of poses increases. 

\begin{figure}[!t]
	\centering
	\includegraphics[width=\columnwidth]{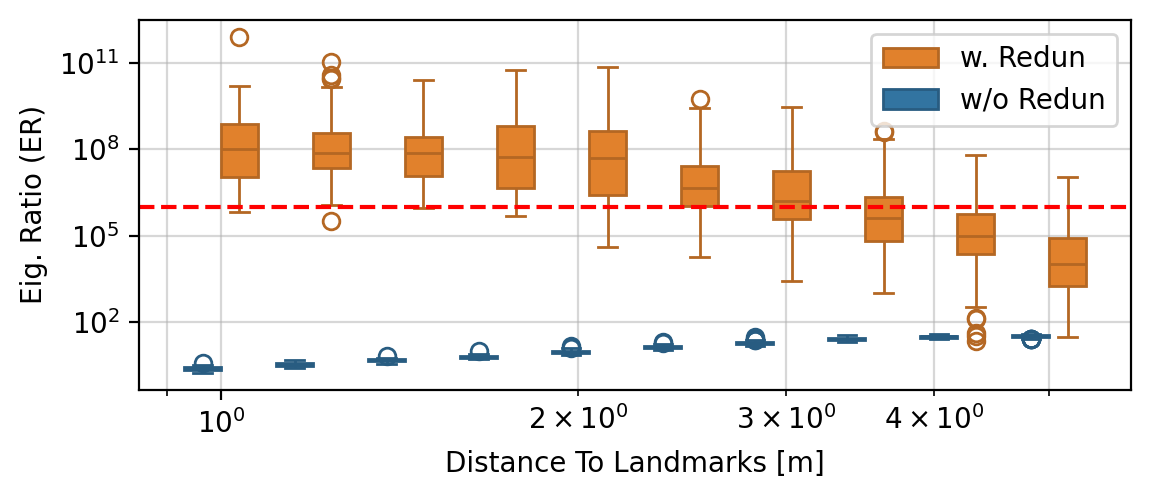}
	\caption{Investigation of tightness for a single-pose, stereo-camera, SLAM problem \rev{with and without redundant constraints (orange and blue, respectively). The dashed red line indicates the ER at which we consider a problem to be tight ($1\times10^6$). Each box represents 100 trials set up according to the diagram shown in Figure~\ref{fig:stereo_redun} (a), with 20 landmarks uniformly sampled from a 1 m bounding box centered at different distances from the pose ($x$-axis). The redundant constraints significantly improve the tightness (ER) across trials.}}
	\label{fig:stereo_slam_box}
\end{figure}

\section{Real-World Experiments}\label{sec:RealExp}
\subsection{Outdoor Stereo Localization}\label{sec:OutdoorLoc}

In the preceding sections, we have mentioned the fact that since matrix-weighted Wahba's problem has a small number of variables and constraints ($13$ and$31$, respectively), it is still feasible to use the SDP relaxation for post processing or even real-time applications. In this section, we apply our relaxation of matrix-weighted Wahba's problem (Problem \eqref{opt:Localize}) in the stereo-localization pipeline introduced in~\cite{gridsethKeepingEyeThings2022}, which uses a neural network to detect a set of features that are robust to seasonal and lighting conditions.

\subsubsection{Stereo-Localization Pipeline}

The full pipeline can be seen in Figure~\ref{fig:inthedark}(d). A neural network is used to detect features and provide descriptors for subsequent data association. The features are converted from 2D stereo keypoints to 3D keypoints, which are then used to estimate relative poses between keyframe stereo images in a stored map and corresponding stereo images from a `live run'. In~\cite{gridsethKeepingEyeThings2022}, the pose is estimated via random sample consensus followed by pose refinement with a scalar-weighted Singular Value Decomposition (SVD)\footnote{See~\cite{umeyamaLeastSquaresEstimationTransformation1991} for more details on this method.} approach.

\subsubsection{Modifications to Pose Estimation}

We replace the pose refinement block with a matrix-weighted optimization (i.e., Wahba's problem). A stereo-camera model is used to compute the inverse covariances of the 3D keypoint measurements (see Section~\ref{App:stereo} for details), which are then used as the matrix weights. Since the pipeline does not make use of relative-pose measurements (i.e., IMU data), each pose can be solved separately. 

We solved the matrix-weighted pose-refinement optimization with both a local solver and global solver. The local optimization was performed using a Gauss-Newton solver over the $\mbox{SE}(3)$ Lie group in an off-the-shelf framework called Theseus~\cite{pinedaTheseusLibraryDifferentiable2022}. Tolerances (relative and absolute) were set to $1\times10^{-10}$ with 200 as the maximum number of iterations. The maximum number of iterations recorded was 160. Theseus was initialized using the best pose estimate from RANSAC, as is common in practice. 

The problem was solved globally via the SDP relaxation of Problem \eqref{opt:Localize} \emph{with} the redundant constraints given in \eqref{eqn:so3_constraints}. The cost matrix was computed as shown in Section~\ref{App:LocCost}. For each pose, we used CVXPY~\cite{diamondCVXPYPythonEmbeddedModeling} with Mosek~\cite{mosek} to solve the SDP. The solution was extracted by selecting the column corresponding to the homogenizing variable in the solution matrix. The relevant interior-point tolerances for Mosek were also set to $1\times10^{-10}$ with the maximum iterations set to 1000 (though the number of iterations did not exceed 30).

All approaches were implemented in PyTorch. Note that the entire pipeline up to the pose refinement step (including RANSAC with SVD to find inliers) was identical for both solvers pipelines. On average, approximately 542 inlier 3D measurements were returned by RANSAC for the refinement step.  

\subsubsection{Dataset and Results}

\begin{figure}[]
	\centering
	\includegraphics[width=\columnwidth]{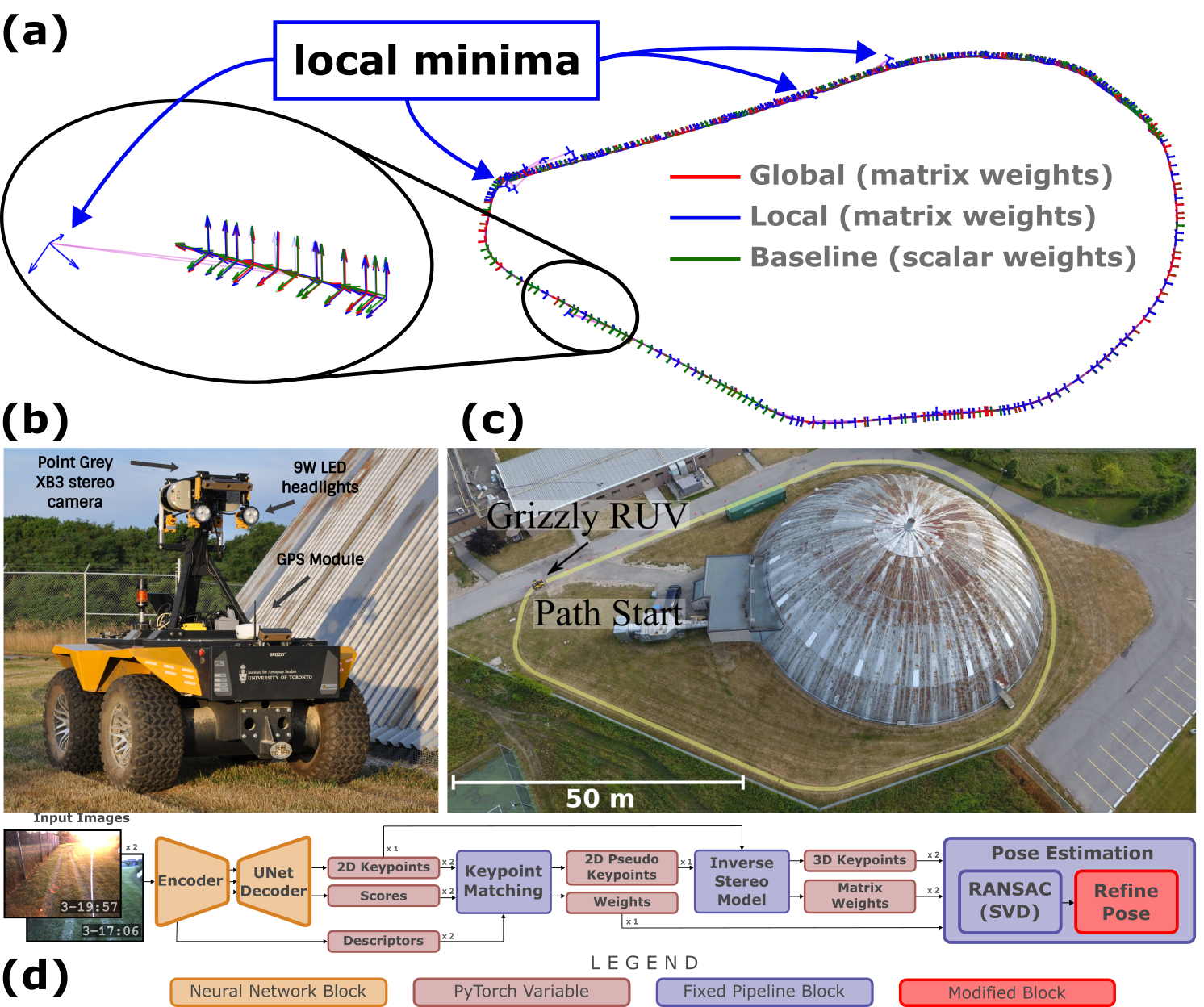}
	\caption{Overview of the setup and results of the outdoor stereo localization experiment introduced in Section~\ref{sec:OutdoorLoc} (map run: 2, live run: 16). (a) shows the estimated trajectory for all three methods and highlights the local minima that appear when using the Local method. (b) shows the robot (Grizzly RUV) and camera (Point Grey XB3) that were used. (c) shows an aerial view of the localization track and (d) shows the stereo-localization pipeline (with modified blocks in \textcolor{red}{red}).}
	\label{fig:inthedark}
\end{figure}

We tested the pipeline on runs 2, 11, 16, 17, 23, 28, and 35 from the `In-the-Dark' dataset, which was used for training and testing in~\cite{gridsethKeepingEyeThings2022}.\footnote{Dataset is available at \url{http://asrl.utias.utoronto.ca/datasets/2020-vtr-dataset/}. The selected runs correspond to the hold-out test runs from~\cite{gridsethKeepingEyeThings2022}.} The number of poses in each run varies, ranging between 886 and 3634, but the path taken in all runs is shown in Figure~\ref{fig:inthedark} (c). 

We localize between all pairs of the seven runs (21 localizations total). Each run can be used as either the `map' or the `live' run since all are equiped with ground truth. Localization was performed for the Baseline (SVD), Local (Theseus), and Global (SDP) solvers by running the pipeline with each on an NVIDIA Tesla V100 DGXS GPU with a Intel Xeon 2.20GHz CPU. We present the aggregate results of the analysis in terms of average time per pose, longitudinal root-mean-square error (RMSE), latitudinal RMSE, and heading RMSE for each of the pipelines in Table~\ref{tbl:inthedark}. 

\rev{We observe} that, \rev{in terms of accuracy,} the Local solver performs significantly worse than the Baseline and Global methods. Further investigation revealed that this was because, for several poses throughout the runs, the Local solver converged to egregious local minima. An example of such a minimum can be seen in Figure~\ref{fig:inthedark}(a). As such, we have added a `filtered' row to Table~\ref{tbl:inthedark}, which provides the Local solver results when pose estimates with heading error larger than 3 deg are removed. Even when filtered, the Local solver still has the largest RMSE values, possibly due to less salient local minima.

It is interesting that the (scalar-weighted) Baseline approach outperforms the (matrix-weighted) approaches in terms of RMSE and compute time. However, we note that the `ground truth' trajectories for each run were computed by minimizing a reprojection-based cost (see~\cite{patonBridgingAppearanceGap2016a}) and are subject to additional error, which is likely on the order of the difference between the Baseline and Global solvers ($\leq0.005$ m and $\leq0.05$ deg). Further, the fact that the neural network was trained \emph{using} the Baseline approach likely contributes to the performance difference.

Finally, we call attention to the fact that, in terms of speed, the Global SDP approach ran only about 2.3 times slower than the (closed-form) Baseline solution. Given the speed (8.19 Hz) and accuracy of the SDP solver, the authors argue that it could be used for online estimation, especially if implemented in a more performant language than Python (such as C++).

\begin{table}
	\label{tbl:inthedark}
	\centering
	\caption{Aggregate Results Across Runs for In-The-Dark Dataset}
	\begin{tblr}{lcrrrrr}
		\hline
		Pipeline & Weights & {Avg. Time\\Per Pose} & {Long.\\RMSE\\(m)} &  {Lat.\\RMSE\\(m)} &   {Head.\\RMSE\\(deg)} \\
		\hline
		Baseline     &  scalar  &    0.055 & 0.025 & 0.013 & 0.249 \\
		Global       &  matrix  &    0.121 & 0.033 & 0.021 & 0.335 \\
		Local        &  matrix  &    0.118 & 0.175 & 0.153 & 5.415 \\
		{Local\\(filtered)}& matrix &    0.118 & 0.037 & 0.035 & 0.598 \\
		\hline
	\end{tblr}
\end{table}

\subsection{Stereo SLAM in a Controlled Environment}\label{sec:stereoslam_starry}

In this section, we test matrix-weighted SLAM on the `Starry Night' dataset~\cite{barfoot2011state}, which provides stereo-camera measurements of a set of known landmark locations (with known data association). The parameters of the stereo-camera model are shown in Table~\ref{tbl:cam_params}, with parameter symbols consistent with those described in Appendix~\ref{App:stereo}.

\begin{table}[H]
	\caption{Ground-Truth Camera Parameters}\label{tbl:cam_params}
	\centering
    \begin{tabular}{ |l|c|c|c|c|c|c|c| }
    \hline
    Parameter& $b $ & $f_u$ & $f_v$ & $c_u$ & $c_v$ & $\sigma_u$ &  $\sigma_v$ \\ \hline
	Units& m & $\frac{\mbox{pix}}{\mbox{m}}$ & $\frac{\mbox{pix}}{\mbox{m}}$ & pix & pix & pix & pix \\
	\hline
    Values & 0.24 & 484.5  & 484.5  & 0.0  &  0.0 & 6.32 &  11.45 \\
	\hline
    \end{tabular} 
\end{table}

To assess tightness of matrix-weighted SLAM with stereo-image measurements, we randomly select 10 poses from the dataset and solve the SDP relaxation with and without redundant constraints. We add relative-pose measurements between subsequent poses based on the ground-truth data and perturb these measurements by a controlled amount of (isotropic) noise allowing to assess the effect of these additional measurements. All measurements and matrix-weights were defined as explained in Section~\ref{sec:Background} and the related sections of the Appendix.

The results are shown in Figure~\ref{fig:d3_slam_box}. At all times, the distance between the pose and the landmarks is within 1.75 m.  Without redundant constraints, the problem is not tight across the noise levels considered. On the other hand, with redundant constraints, the problem remains tight until the relative-pose measurement noise exceeds approximately 0.06 m and 0.06 rad, reflecting tightness up to a reasonable level of noise. 

\begin{figure}[!t]
	\centering
	\includegraphics[width=\columnwidth]{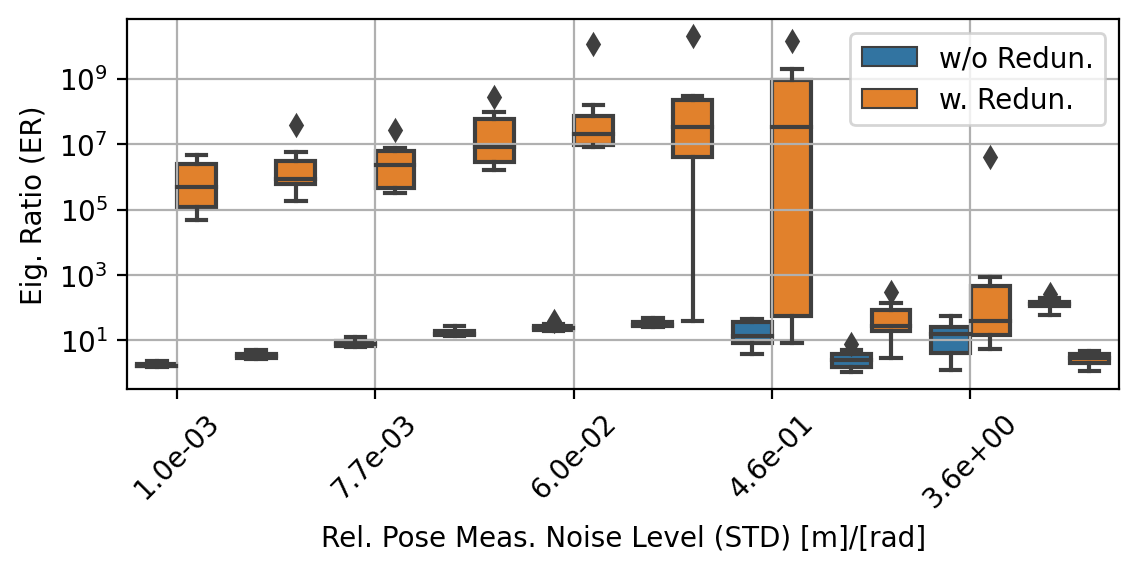}
	\caption{ER  results with and without redundant constraints for SLAM on random selections of 10 poses from the ``Starry Night'' dataset with relative-pose measurements between subsequent poses. 10 trials (random selection of poses) were performed at each noise level. Without redundant constraints (blue), the SLAM problem is not tight for any of the tested noise levels. With redundant constraints (orange) the problem is tight for reasonable noise levels.}
	\label{fig:d3_slam_box}
\end{figure}

Figure~\ref{fig:slam_local_min} shows an example of a local and global minimum for a 10-pose SLAM problem using measurements from the ``Starry Night'' dataset. Relative-pose measurements between subsequent poses were perturbed by Gaussian noise with standard deviation of $5.0 \times 10^{-2}$ m and $5.0 \times 10^{-2}$ rad in both rotation and translation. The local minimum was the result of running a Gauss-Newton (GN) solver from a randomly selected, poor initialization point (gradient and cost converged to $8.87\times10^{-12}$ and $1.13\times10^3$, respectively) while the global solution was found by solving the matrix-weighted SLAM SDP relaxation with redundant constraints ($\mbox{ER}$ and cost converged to $2.43\times10^6$ and $9.36$, respectively).\footnote{Note that for good initializations, it was confirmed that the GN solver converged to the same cost and solution as the SDP relaxation.} 

It is important to note that we have restricted these experiments to a low number of poses because of the \emph{necessary} addition of variables and constraints, which makes the matrix-weighted SLAM problem intractable for medium-to-large-scale problems. However, we believe that the analysis and results presented herein are valuable to the robotics community and demonstrate that certifiable methods can be extended to this problem.

\section{Conclusions}\label{sec:Conclusions}
\subsection{Discussion}
We have shown that inclusion of matrix weights in state-estimation problems can have profound implications on the tightness of their semidefinite relaxations. In the case of Wahba's problem, matrix weights can decrease the noise level for which a given problem instance is tight to well below those found in practice. For SLAM, the introduction of matrix weights leads to a fundamental change in the formulation of the problem and the resulting formulation is not tight even for very low noise levels (without redundant constraints). 

We have established a key connection between the posterior distribution of a state estimate and the dual or certificate matrix. We have also explored the relationship between the noise distribution of measurements, the posterior distribution, and the tightness of the semidefinite relaxation. Namely, anisotropicity in the underlying noise model results in non-tight relaxations when uncertainty is aligned in a given direction. This effect can be counteracted (to some extent) by increasing the number and variety of measurements in the problem.

One of the goals of our analysis was to determine whether redundant constraints are necessary for the problems considered herein. Based on the results in Sections~\ref{sec:SimWahbaStereo} and~\ref{sec:SimStereoSLAM}, redundant constraints seem to be necessary to achieve reliable robustness to noise in stereo-camera-based applications. In the case of Wahba's problem, our results in Section~\ref{sec:OutdoorLoc} show that, \rev{even with redundant constraints, directly solving the SDP can be fast enough to be used in real applications}. Moreover, we have shown that even when initialized well, local methods can converge to egregious local minima. On the other hand, for matrix-weighted SLAM, the addition of even a sparse set of redundant constraints seems to be prohibitive for online use, as shown in Sections~\ref{sec:SimStereoSLAM} and~\ref{sec:stereoslam_starry}.

\subsection{Future Work}\label{sec:Future}

The state-of-the-art, large-scale, certifiable perception methods in robotics concentrate on cases that do not require redundant constraints~\cite{rosenSESyncCertifiablyCorrect2019,brialesCartanSyncFastGlobal2017}. However, even a relatively small change to the noise model (e.g., introducing anisotropicity) seems to necessitate such constraints. Therefore, we posit that a crucial area of development for certifiable algorithms in robotics involves maintaining computational speed even with these additional constraints. 

The exploration of amendments to the \emph{Riemannian Staircase} and \emph{Burer-Monteiro} techniques to accommodate redundant constraints is one potential avenue of future work. This would involve addressing potential bottlenecks during the certification step due to a costly search for optimal Lagrange multipliers. 

Another key avenue, which has proved essential in the past, is to exploit the sparsity of the SDP with redundant constraints as shown in~\cite{zhengChordalFactorwidthDecompositions2021}. In conjunction with the design of sparsity-promoting redundant constraints, this approach may yield the speed that is required for more general, online, large-scale certifiable solvers.

\section{Acknowledgments}

The authors would like to thank Natural Sciences and Engineering Research Council of Canada (NSERC) for their generous support of this work. The authors would also like to thank Matt Giamou and Dave Rosen for their useful insights and suggestions.
 
\bibliographystyle{IEEEtrans}
\bibliography{MatrixWeightCert,Software}

\appendix

\subsection{Proof of Lemma~\ref{lem:FisherInfo}}\label{App:lemma1Proof}
\begin{proof}
\rev{Taking inspiration from~\cite{nocedalNumericalOptimization2006}, we use a second-order Taylor approximation of the Lagrangian of the QCQP~\eqref{opt:QCQP} to prove our result. This Lagrangian} can be expressed as follows~\cite{boydConvexOptimization2004}:
\begin{equation*}
	\mathcal{L}(\bm{z},\bm{\lambda}, \rho) = \bm{z}^T \left(\bm{Q} + \rho\bm{A}_0 +\sum\limits_{i=1}^{N_c} \lambda_i \bm{A}_i\right) \bm{z} - \rho.
\end{equation*}
Recalling the definition of the certificate matrix from the dual problem, \eqref{opt:Dual}, we have
\begin{equation*}
	\mathcal{L}(\bm{z},\bm{\lambda}, \rho) = \bm{z}^T \bm{H}(\bm{\lambda}, \rho) \bm{z} - \rho.
\end{equation*}

Let $\bm{x} = \bm{x}^*+\delta\bm{x} \in \mathcal{U}$ be a perturbation of the optimal solution, $\bm{x}^*$, of Problem \eqref{opt:UnconstrainedMAP}. Since, by assumption, the objective functions are locally equal, we have $\bm{z}^* = \bm{\ell}(x^*)$.  Let $\bm{z}= \bm{\ell}(x)$. Since $\bm{\ell}$ is smooth, we have the following perturbation to $\bm{z}^*$:
\begin{equation}\label{eqn:z_pert}
	\delta\bm{z} = \bm{z} - \bm{z}^* = \bm{L} \delta\bm{x} + O(\delta\bm{x}),
\end{equation}
where $\delta\bm{x}=\bm{x} - \bm{x}^*$ and $O(\cdot)$ denotes Bachmann-Landau (``big-o'') notation. Since $\bm{z}$ is in the feasible set of Problem \eqref{opt:QCQP}, the Lagrangian at $\bm{z}$ is equal to the objective of Problem \eqref{opt:QCQP} and, by assumption, we have
\begin{equation*}
	\mathcal{L}(\bm{z},\bm{\lambda}, \rho) =\mathcal{L}(\bm{\ell}(\bm{x}),\bm{\lambda}, \rho)=\bm{\ell}(\bm{x})^T\bm{Q}\bm{\ell}(\bm{x}) = -\log(p(\bm{x} \vert \bm{\mathcal{D}})).
\end{equation*}

We now proceed by considering second-order Taylor expansions of the Lagrangian about $\bm{z}^*$.\footnote{As in~\cite[Theorem 12.5]{nocedalNumericalOptimization2006} we leave the Lagrange multipliers fixed} Letting  $\bm{H}= \bm{H}(\bm{\lambda}^*, \rho^*)$ to simplify notation, we have
\begin{align*}
	\mathcal{L}(\bm{z},\bm{\lambda}^*, \rho^*) & =\bm{z}^T \bm{H} \bm{z} - \rho^* \\ 
	&= (\bm{z}^*+\delta\bm{z})^T \bm{H} (\bm{z}^*+\delta\bm{z}) - \rho^* \\ 
	&= -\rho^* + \delta\bm{z}^T \bm{H}\delta\bm{z}
\end{align*}
where the third line follows from the first-order necessary optimality conditions ($\bm{H} \bm{z}^*=\bm{0}$). Applying \eqref{eqn:z_pert}, we get
\begin{equation*}
	-\log(p(\bm{x} \vert \bm{\mathcal{D}})) = -\rho^* + \delta\bm{x}^T \bm{L}^T\bm{H}\bm{L}\delta\bm{x} + O(\delta\bm{x}^2),
\end{equation*}
where $c$ is a constant. Since the FIM is exactly the Hessian of $-\log(p(\bm{x} \vert \bm{\mathcal{D}}))$ evaluated at $\delta\bm{x} = \bm{0}$, we have the result:
\begin{equation}
	\bm{\Sigma}^{-1} = \bm{L}^T\bm{H}\bm{L}.
\end{equation}
\end{proof}

\subsection{Proof of Proposition~\ref{prop:eig_bounds}}\label{App:lem2Proof}

\begin{proof}
By assumption, we have $\bm{\Sigma}^{-1} = \bm{L}^T\bm{H}\bm{L}$. Without loss of generality, assume that the homogenizing variable of $\bm{z}$ is the last element in the vector. Since this element is always equal to one, the Jacobian of the feasible set mapping is given by $\bm{L}^T = \begin{bmatrix}
	\bar{\bm{L}}^T & \bm{0}^T
\end{bmatrix}$. It follows that $\bm{\Sigma}^{-1} = \bar{\bm{L}}^T\bar{\bm{H}}\bar{\bm{L}}$.

Since $\bm{\ell}$ is injective, $\bar{\bm{L}}$ has full column rank. It follows that\footnote{See proof of Corollary 2.4.4 in~\cite{golubMatrixComputations2013}.}
\begin{equation}\label{eqn:normSandwich}
	\rev{s_{\min}}(\bar{\bm{L}}) \Vert\bm{v}\Vert \leq  \Vert\bar{\bm{L}}\bm{v}\Vert \leq \rev{s_{\max}}(\bar{\bm{L}})\Vert\bm{v}\Vert,
\end{equation}
where $\Vert\cdot\Vert$ denotes the Euclidean norm, $\rev{s_{\min}}(\bar{\bm{L}})$ is the minimum singular value of $\bm{L}$, and $\rev{s_{\max}}(\bar{\bm{L}})$ is the maximum singular value of $\bm{L}$. We proceed using the Raleigh quotient characterization of eigenvalues, noting that both $\bm{\Sigma}^{-1}$ and $\bar{\bm{H}}$ are \rev{real, symmetric} matrices.
\begin{align*}
	\sigma_{\min}(\bar{\bm{H}}) &= \min\limits_{\bm{y}\in\mathbb{R}^n} \frac{\bm{y}^T \bar{\bm{H}} \bm{y}}{\Vert\bm{y}\Vert^2} 
	\leq \min\limits_{\substack{\bm{y}=\bar{\bm{L}}\bm{v}\\\bm{v}\in\mathbb{R}^p}} \frac{\bm{y}^T \bar{\bm{H}} \bm{y}}{\Vert\bm{y}\Vert^2} \\&= \min\limits_{\bm{v}\in\mathbb{R}^p} \frac{\bm{v}^T \bar{\bm{L}}^T\bar{\bm{H}}\bar{\bm{L}} \bm{v}}{\Vert\bar{\bm{L}}\bm{v}\Vert^2 }  
	\leq \min\limits_{\bm{v}\in\mathbb{R}^p} \frac{\bm{v}^T \bar{\bm{L}}^T\bar{\bm{H}}\bar{\bm{L}} \bm{v}}{\rev{s_{\min}}(\bar{\bm{L}})^2\Vert\bm{v}\Vert^2 } 
	\\&= \frac{\sigma_{\min}(\bm{\Sigma}^{-1})}{\rev{s_{\min}}(\bar{\bm{L}})^2},
\end{align*}
where the first inequality follows from the fact that we are restricting the feasible set of the optimization and the second follows from \eqref{eqn:normSandwich}. Finally, note that $\sigma_{\min}(\bar{\bm{L}}) = \sigma_{\min}(\bm{L})$.

\end{proof}

\rev{
\subsection{Degeneracy of the FIM and Tightness}\label{app:fim-degeneracy}
\begin{proposition}
	Assume that the setting of Proposition~\ref{prop:eig_bounds} holds and strict complementarity holds for Problem~\eqref{opt:SDP}. If the FIM is degenerate (i.e., $\sigma_{\min}(\bm{\Sigma}^{-1}) = 0$), then the SDP relaxation becomes non-tight. That is, either $\bm{H}$ has negative eigenvalues or the optimal primal SDP solution has rank higher than one.
\end{proposition}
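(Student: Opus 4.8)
The plan is to combine the eigenvalue bound of Proposition~\ref{prop:eig_bounds} with Cauchy's eigenvalue interlacing theorem to pin down the sign of the minimum eigenvalue of the full certificate matrix $\bm{H}$, and then to split the argument into two cases according to that sign, each of which maps onto one of the two disjuncts in the statement. First I would invoke Proposition~\ref{prop:eig_bounds}: since the FIM is degenerate, $\sigma_{\min}(\bm{\Sigma}^{-1}) = 0$, the bound \eqref{eqn:eig_bound} immediately gives $\sigma_{\min}(\bar{\bm{H}}) \leq 0$. Because $\bar{\bm{H}}$ is the principal submatrix of $\bm{H}$ obtained by deleting the row and column of the homogenizing variable, interlacing yields $\sigma_{\min}(\bm{H}) \leq \sigma_{\min}(\bar{\bm{H}}) \leq 0$. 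The strict and non-strict cases of this final inequality then correspond exactly to the two alternatives of the claim.

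In the case $\sigma_{\min}(\bm{H}) < 0$, the certificate matrix has a negative eigenvalue, which is precisely the first alternative, and no further work is required. The remaining case, $\sigma_{\min}(\bm{H}) = 0$, is where the real work lies; here the goal is to show $\mbox{corank}(\bm{H}) \geq 2$ so that strict complementarity forces $\mbox{rank}(\bm{Z}^*) \geq 2$. In this case $\bm{H} \succeq \bm{0}$ (its minimum eigenvalue is zero), hence the principal submatrix $\bar{\bm{H}}$ is also PSD, and the interlacing chain pins $\sigma_{\min}(\bar{\bm{H}}) = 0$ as well. I would then extract a nonzero vector $\bm{v}$ with $\bm{\Sigma}^{-1}\bm{v} = \bm{0}$, guaranteed by degeneracy, and set $\bm{w} = \bar{\bm{L}}\bm{v}$; full column rank of $\bar{\bm{L}}$ (from injectivity of $\bm{\ell}$, as in the setting of Proposition~\ref{prop:eig_bounds}) gives $\bm{w} \neq \bm{0}$, while $\bm{w}^T \bar{\bm{H}} \bm{w} = \bm{v}^T \bm{\Sigma}^{-1} \bm{v} = 0$ together with $\bar{\bm{H}} \succeq \bm{0}$ upgrades this to $\bar{\bm{H}}\bm{w} = \bm{0}$.

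Padding with a zero in the homogenizing coordinate, $\tilde{\bm{w}} = \begin{bmatrix} \bm{w}^T & 0 \end{bmatrix}^T$ satisfies $\tilde{\bm{w}}^T \bm{H} \tilde{\bm{w}} = \bm{w}^T \bar{\bm{H}} \bm{w} = 0$, and PSD-ness of $\bm{H}$ promotes this to $\bm{H}\tilde{\bm{w}} = \bm{0}$. Since $\bm{z}^*$ is already a null vector of $\bm{H}$ by first-order optimality ($\bm{H}\bm{z}^* = \bm{0}$, as established in the proof of Lemma~\ref{lem:FisherInfo}), and its homogenizing component equals one while that of $\tilde{\bm{w}}$ is zero, the two null vectors are linearly independent, so $\mbox{corank}(\bm{H}) \geq 2$. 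Invoking the complementary-slackness relation from Section~\ref{sec:Relaxation} under the assumed strict complementarity, $\mbox{rank}(\bm{Z}^*) = \mbox{corank}(\bm{H}) \geq 2 > 1$, which is the second alternative.

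The main obstacle is the boundary case $\sigma_{\min}(\bm{H}) = 0$: interlacing alone yields only corank at least one, which always holds through $\bm{z}^*$, so the crux is manufacturing a \emph{second}, independent null direction. The two facts that make this go through are (i) the PSD-ness argument that converts the vanishing of the quadratic form into exact kernel membership, and (ii) the homogenizing coordinate, which cleanly separates the new null vector from $\bm{z}^*$. A secondary point to verify carefully is that the FIM degeneracy vector $\bm{v}$ maps through $\bar{\bm{L}}$ to a genuinely nonzero $\bm{w}$, which is exactly where the injectivity of $\bm{\ell}$ is used.
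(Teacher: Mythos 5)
Your proof is correct and follows essentially the same route as the paper's: apply Proposition~\ref{prop:eig_bounds} to get $\sigma_{\min}(\bar{\bm{H}})\leq 0$, split on the sign of the minimum eigenvalue, and in the boundary case produce a second null vector of $\bm{H}$ by padding a null vector of $\bar{\bm{H}}$ with a zero homogenizing coordinate, concluding via strict complementarity. The only differences are cosmetic: you construct the null direction of $\bar{\bm{H}}$ explicitly as $\bar{\bm{L}}\bm{v}$ from the FIM kernel and use PSD-ness of $\bm{H}$ to upgrade $\tilde{\bm{w}}^T\bm{H}\tilde{\bm{w}}=0$ to $\bm{H}\tilde{\bm{w}}=\bm{0}$, whereas the paper simply cites existence of a null vector of $\bar{\bm{H}}$ and verifies $\bm{H}\bm{z}_1=\bm{0}$ directly via the identity $\bm{h}=-\bar{\bm{H}}\bm{y}_0$.
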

\begin{proof}
	Let $\bm{z}_0 = \begin{bmatrix} \bm{y}_0^T &1 \end{bmatrix}^T$ be the globally optimal solution to the QCQP, with associated certificate matrix $\bm{H}$. By definition, we have $\bm{H}\bm{z}_0 = \bm{0}$. It can be shown that this implies that $\bm{h} = -\bar{\bm{H}}\bm{y}_0$.
	
	By assumption, we have that $\sigma_{\min}(\bm{\Sigma}^{-1})=0$. Thus, Proposition~\ref{prop:eig_bounds} implies that $\sigma_{\min}(\bar{\bm{H}})\leq 0$. If $\sigma_{\min}(\bar{\bm{H}})<0$ then $\sigma_{\min}(\bm{H}) < 0$ and the relaxation cannot be tight (since $\bm{z}_0$ is globally optimal). 
	
	On the other hand, if $\sigma_{\min}(\bm{H}) \geq 0$ then $\sigma_{\min}(\bar{\bm{H}}) \geq 0$ and, by Proposition~\ref{prop:eig_bounds}, we have $\sigma_{\min}(\bar{\bm{H}}) = 0$. It follows that there exists a vector $\bm{y}_1$ such that $\bm{\bar{H}}\bm{y}_1=\bm{0}$. 
	Let $\bm{z}_1 = \begin{bmatrix} \bm{y}_1^T &0 \end{bmatrix}^T$ and note that,
	\begin{equation*}
		\bm{H}\bm{z}_1 = \begin{bmatrix}
			\bm{\bar{H}}\bm{y}_1 \\ \bm{h}^T\bm{y}_1
		\end{bmatrix} = \begin{bmatrix}
			\bm{\bar{H}}\bm{y}_1 \\ -\bm{y}_0^T\bm{\bar{H}}\bm{y}_1
		\end{bmatrix} = \bm{0}.
	\end{equation*}
	Since $\bm{z}_0$ and $\bm{z}_1$ are linearly independent, $\bm{H}$ has two minimum eigenvalues at zero. By strict complementarity, the corresponding primal SDP solution will have rank equal to two and is therefore not tight.
\end{proof}
\begin{remark}
	From the proof, we see that degeneracy of the FIM implies that the Lagrangian has the same value (to first order) for any vector, $\bm{z}_0 + \mu\bm{z}_1 $, as long as $\mu\in\mathbb{R}$ is small. A further implication is that at the global solution, $\bm{z}_0$, there is a feasible direction, $\bm{z}_1$, which does not lead to an increase in the objective. This implies that the original QCQP has symmetries in its solution (i.e., its solution is not unique). It has been shown that such symmetries in the original QCQP solution cause higher rank SDP relaxation solutions, corresponding to the convex hull of the lifted rank-one solutions \cite{brialesCertifiablyGloballyOptimal2018}.
\end{remark}
}

\subsection{Cost Function Elements}\label{App:LocCost}

In this section, we show how to determine the QCQP cost matrices for the localization and SLAM problems given in Sections~\ref{sec:Localization} and~\ref{sec:SLAM}. Throughout this section, we use the facts that $ \tr{\bm{A}\bm{B}\bm{C}} = (\bm{C}^T\otimes\bm{A}) \vect{\bm{B}}$, $ \bm{A} = 1\otimes\bm{A} $, and $ (\bm{A}\otimes\bm{B})(\bm{C}\otimes\bm{D}) = (\bm{AC}\otimes\bm{BD}) $\cite{magnusMatrixDifferentialCalculus2019}. 

\subsubsection{Landmark Measurements}

Consider a single cost element corresponding to an edge in $\EdgeSet_m $:
\begin{align*}
	J_{ik}=&(\tilde{\bm{m}}_i^{ki} - \bm{C}_i\bm{m}_{0}^{k0} + \bm{t}_i)^T \bm{W}_k (\tilde{\bm{m}}_i^{ki} - \bm{C}_i\bm{m}_{0}^{k0} + \bm{t}_i) \\
	=& \tilde{\bm{m}}_i^{ki^T}\bm{W}_k\tilde{\bm{m}}_i^{ki} w^2 - 2 w\tilde{\bm{m}}_i^{ki^T}\bm{W}_k\bm{C}_i\bm{m}_{0}^{k0}  \\
	&+ 2 w\tilde{\bm{m}}_i^{ki^T}\bm{W}_k\bm{t}_i -2\bm{m}_{0}^{k0^T}\bm{C}_i^T\bm{W}_k\bm{t}_i\\
	& + \bm{m}_{0}^{k0^T}\bm{C}_i^T\bm{W}_k\bm{C}_i\bm{m}_{0}^{k0} + \bm{t}_i^T\bm{W}_k\bm{t}_i,
\end{align*}
Re-organizing terms, we see that this cost element can be written as $J_{ik}~=~\bm{x}^T\bm{Q}_{ik}\bm{x}_i$, where $ \bm{x}_i^T = \begin{bmatrix} \bm{c}_i^T &  \bm{t}_i^T & w \end{bmatrix} $ and $ \bm{c}_i=\vect{\bm{C}_i} $.
\begin{equation*}
	\resizebox{0.98\columnwidth}{!}{$\bm{Q}_{ik}=\begin{bmatrix}
			\bm{m}_{0}^{k0}\bm{m}_{0}^{k0^T} \otimes \bm{W}_k  & -\bm{m}_{0}^{k0}\otimes\bm{W}_k & -\bm{m}_{0}^{k0}\otimes\bm{W}_k\tilde{\bm{m}}_i^{ki}\\
			 -\bm{m}_{0}^{k0^T}\otimes\bm{W}_k & \bm{W}_k & \bm{W}_k\tilde{\bm{m}}_i^{ki} \\
			-\bm{m}_{0}^{k0^T}\otimes\tilde{\bm{m}}_i^{ki^T}\bm{W}_k & \tilde{\bm{m}}_i^{ki^T}\bm{W}_k & \tilde{\bm{m}}_i^{ki^T}\bm{W}_k\tilde{\bm{m}}_i^{ki}
		\end{bmatrix}.$}
\end{equation*}

\subsubsection{Relative-Pose Measurements}

Similarly, each edge in $\EdgeSet_p$ represents a cost element of the following form:
\begin{equation}
	J_{ij} = \frac{1}{\sigma^2_{ij}} \left\Vert \tilde{\bm{C}}_{ij}\bm{C}_j - \bm{C}_i\right\Vert_F^2 + \frac{1}{\tau^2_{ij}} \left\Vert \tilde{\bm{t}}^{ji}_{i} - \tilde{\bm{C}}_{ij}\bm{t}_j + \bm{t}_i \right\Vert_2^2.
\end{equation}
As before, we collect the relevant variables,
\begin{equation*}
	\bm{x}_{ij}^T = \begin{bmatrix} \bm{c}_i^T&\bm{c}_j^T  & \bm{t}_i^T & \bm{t}_j^T & w\end{bmatrix},
\end{equation*}
allowing us to write the cost element as
\begin{gather*}
	J_{ij}= \bm{x}_{ij}^T\bm{Q}_{ij}\bm{x}_{ij}, \\
	\bm{Q}_{ij} = \begin{bmatrix}
		\frac{1}{\sigma^2_{ij}}\bm{Q}_{r,ij} & \bm{0}\\
		\bm{0}&\frac{1}{\tau^2_{ij}}\bm{Q}_{t,ij}
	\end{bmatrix},
\end{gather*}
with
\begin{gather*}
	\bm{Q}_{r,ij}=\begin{bmatrix}
		\bm{I}  &  -\bm{I}\otimes\tilde{\bm{C}}_{ij}^T\\
		-\bm{I}\otimes\tilde{\bm{C}}_{ij} & \bm{I}
	\end{bmatrix}, \\ 
	\bm{Q}_{t,ij}=\begin{bmatrix}
		\bm{I}  &  -\tilde{\bm{C}}_{ij} & \tilde{\bm{t}}_{i}^{ji}\\
		-\tilde{\bm{C}}_{ij}^T & \bm{I} & -\tilde{\bm{C}}_{ij}\tilde{\bm{t}}_{i}^{ji}\\
		\tilde{\bm{t}}_{i}^{ji^T} &  -\tilde{\bm{t}}_{i}^{ji^T}\tilde{\bm{C}}_{ij}^T & \tilde{\bm{t}}_{i}^{ji^T} \tilde{\bm{t}}_{i}^{ji}
	\end{bmatrix}.
\end{gather*}

The blocks of these cost elements can be permuted according to the variable ordering defined for a given problem and the final cost matrix is obtained by summing all cost elements.

\subsection{Stereo-Camera Model}\label{App:stereo}

In this section, we seek to convert pixel measurements from left- and right-rectified stereo images to Euclidean point measurements. We also seek to determine an appropriate model of the uncertainty of the measurement in the Euclidean space. We assume that left and right stereo frames have their $z$-axes aligned with the viewing direction and their other axes coincident. We also assume that they are separated by baseline distance $ b\in \mathbb{R} $ and that the camera frame is coincident with the left camera.  

Consider a 3D point expressed in the world frame, $ \bm{x}_w = \begin{bmatrix} x_w & y_w & z_w  \end{bmatrix}^T $ representing a feature that we wish to track. This point can be expressed in the left camera frame as  $ \bm{x}_c = \bm{C}_{cw} \bm{x}_w + \bm{t}^{wc}_c $, where $ \bm{C}_{cw} \in \mbox{SO}(3)$ is the rotation matrix from the world to the camera frame and $ \bm{t}^{wc}_c \in \mathbb{R}^3 $ is vector from the world frame origin to the camera frame origin expressed in the camera frame.

Assuming a pinhole stereo-camera, the resulting pixel measurements can be expressed as follows:
\begin{equation}
	\begin{bmatrix}
		p_{ul}\\p_{vl}\\p_{ur}\\p_{vr}
	\end{bmatrix} = \frac{1}{z_c}\begin{bmatrix}
		f_{u} & 0 & c_{u} & 0\\
		0 & f_{v} & c_{v} & 0\\
		f_{u} & 0 & c_{u} & -b f_{u}\\
		0 & f_{v} & c_{v} & 0
	\end{bmatrix} \begin{bmatrix}
	\bm{x}_c \\ 1
	\end{bmatrix} + \bm{\epsilon}_{p},
\end{equation}
where $ u $ and $ v $ subscripts represent horizontal and vertical pixel directions, respectively, $ l $ and $ r $ subscripts represent left and right cameras, respectively, $ p_{ij} $ represents the pixel measurement, $ f_{i} $ represents the focal length parameter, and $ c_{i} $ represents the camera centre parameter for direction $ i $ of camera $ j $. The variable $ \bm{\epsilon}_{p} $ represents noise on the pixel measurements and is assumed to have zero-mean, Gaussian distribution,
$ \bm{\epsilon}_{p} \sim \mathcal{N}(\bm{0}, \bm{\Sigma}_p) $,
with covariance matrix $ \bm{\Sigma}_p = \diag{\sigma_u^2 ,\sigma_v^2,\sigma_u^2, \sigma_v^2} $, where $ \sigma_i $ represents the standard deviation of the noise. 

We define the \textit{disparity}, $ d $, of a given feature as the horizontal difference between the feature's position in the right and left images in terms of pixels: $ d = p_{ul} - p_{ur} $. We define the intermediate, Gaussian-distributed measurement, 
\begin{equation}
	\bm{y} = \begin{bmatrix}p_{ul}&p_{vl}&d\end{bmatrix}^T \sim \mathcal{N}(\bm{\mu}_y, \bm{\Sigma}_y),
\end{equation}
where the mean and covariance matrices are given by
\begin{equation}
	\bm{\mu}_y = \begin{bmatrix}
		\frac{1}{z_c}f_u x_c + c_u \\ \frac{1}{z_c}f_v y_c + c_v \\ \frac{1}{z_c}f_u b
	\end{bmatrix},\quad \bm{\Sigma}_y = \begin{bmatrix}
		\sigma_u^2 & 0 & \sigma_u^2\\
		0 & \sigma_v^2 & 0 \\
		\sigma_u^2 & 0 & 2\sigma_u^2
	\end{bmatrix}.
\end{equation}
Given this measurement, we can use the (known) intrinsic camera parameters to generate Euclidean \textit{pseudo-measurements}, $ \hat{\bm{x}} = \begin{bmatrix}\hat{x}_c &\hat{y}_c & \hat{z}_c \end{bmatrix}^T $, of the (unknown) feature locations via the following mapping:
\begin{equation}
	\bm{g}^{-1}:~(\mathbb{R}^2\times\mathbb{R}_+) \rightarrow \mathbb{R}^3, \quad \mbox{s.t.}~\hat{\bm{x}} = \bm{g}^{-1}(\bm{y})=b\begin{bmatrix}
	 	\frac{p_{ul}-c_u}{d} \\ \frac{p_{vl}-c_v}{d} \\ \frac{f_u}{d} 
	 \end{bmatrix}.
\end{equation}
Although this transformation is non-linear, we make the assumption that the distribution of $ \hat{\bm{x}} $ remains Gaussian. To approximate the covariance of the pseudo-measurement, we map the intermediate measurement covariance through the linearized Jacobian of this transformation:
\begin{equation}
	\bm{G} = \frac{\partial \bm{g}^{-1}(\bm{y})}{\partial \bm{y}} = b\begin{bmatrix}
		\frac{1}{d} & 0 & -\frac{p_{ul} - c_u}{d^2} \\
		0 & \frac{f_u}{f_v d} & -\frac{f_u}{f_v }\frac{(p_{vl} - c_v)}{d^2} \\
		0 & 0 & -\frac{f_u}{d^2}
	\end{bmatrix}.
\end{equation}
Typically, such a linearization is performed about a prior belief -- as in a Kalman filter -- or previous iterate -- as in iteratively re-weighted least squares --  of the variables involved. However, in the global-optimization context, neither of these options are available and we choose to linearize about the measurement itself. 

Therefore, the noise model of the pseudo-measurement can be expressed as
\begin{equation}
	 \hat{\bm{x}} = \bm{x}_c + \bm{\epsilon}_x, ~ \bm{\epsilon}_x \sim \mathcal{N}(\bm{0}, \bm{\Sigma}_x),
\end{equation}
where $ \bm{\epsilon}_x $ represents an approximately Gaussian-distributed variable with zero mean and covariance given by $ \bm{\Sigma}_x = \bm{G}\bm{\Sigma}_y \bm{G}^T $.

It is instructive to consider the following alternate formulation of the Jacobian:
\begin{equation}\label{eqn:stereo_jac}
	\bm{G} =\begin{bmatrix}
		\frac{\hat{z}_c}{f_u} & 0 & -\frac{\hat{x}_c}{f_u }\frac{\hat{z}_c}{b} \\
		0 & \frac{\hat{z}_c}{f_v } & -\frac{\hat{y}_c}{f_u }\frac{\hat{z}_c}{b} \\
		0 & 0 & -\frac{\hat{z}_c}{f_u}\frac{\hat{z}_c}{b}
	\end{bmatrix}.
\end{equation}
We see that the Jacobian matrix scales linearly with the $z$-axis coordinate, $\hat{z}_c$, meaning that the variance of the Euclidean measurement scales quadratically with this variable. Ignoring the off-diagonal terms, we note that the anistropicity of the measurement is approximately proportional to $\frac{\hat{z}_c}{b}$.

\subsection{SDP Stability of Matrix-Weighted Localization}\label{App:SDPStability}

In this section, we prove that Problem \eqref{opt:Localize} enjoys the property of `SDP stability' established in~\cite{cifuentesLocalStabilitySemidefinite2022}. That is, the convex relaxation of Problem \eqref{opt:Localize} is tight whenever the set of measurements has sufficiently low noise. In particular, we connect the SDP stability to the well-known condition that the observed landmarks are not coplanar, which has been shown to lead to a unique solution for point-set regression~\cite{arunLeastSquaresFittingTwo1987}. Such results have been given for other state-estimation problems~\cite{rosenSESyncCertifiablyCorrect2019, tianDistributedCertifiablyCorrect2021} and our proof closely follows the development given in~\cite{wiseCertifiablyOptimalMonocular2020}.

\begin{definition}[Noise-free Measurements]
	Given the setting of Problem \eqref{opt:Localize}, we define the set of \emph{noise-free measurements} for a given set of poses, $\left\{(\bar{\bm{C}}_{i}, \bar{\bm{t}}_i) ~ \forall i \in \VertSetP \right\}$, as
	\begin{equation}
		\left\{\bar{\bm{m}}_i^{ki} = \bar{\bm{C}}_{i} \bm{m}_{0}^{k0} - \bar{\bm{t}}_i,~ \forall (i,k)\in \EdgeSet_m\right\},
	\end{equation}
	and collect all such measurements in a vector, $\bar{\bm{m}}$.
\end{definition}

\begin{theorem}[SDP-Stability of Matrix-Weighted Localization]
Consider the setting of Problem \eqref{opt:Localize} with positive definite weighting matrices ($ \bm{W}_{ik} \succ 0$) and suppose the set of landmarks observed from a given pose are not coplanar. Let $\tilde{\bm{m}}$ be a vector containing all of the pose-landmark measurements for the problem and let $\bar{\bm{m}}$ be the set of noise-free measurements associated with the globally optimal solution of the problem with the same ordering as $\tilde{\bm{m}}$. Then, there exists some $\epsilon > 0$ such that if $\Vert \tilde{\bm{m}} - \bar{\bm{m}} \Vert^2 < \epsilon $, the SDP relaxation of the problem is tight (strong duality holds) and the global minimizer can be recovered from the SDP solution.
\end{theorem}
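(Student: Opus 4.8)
The plan is to reduce the claim to the \emph{SDP-stability theorem} of~\cite{cifuentesLocalStabilitySemidefinite2022}, which guarantees that tightness of the relaxation at a noise-free data point persists for all sufficiently small perturbations of the problem data, provided a set of non-degeneracy conditions holds at that point. Since Problem~\eqref{opt:Localize} is separable across poses, I would first reduce to a single-pose instance (Wahba's problem); the multi-pose stability radius is then simply the minimum of the per-pose radii. The argument then proceeds in three stages: (i) establish tightness at the noise-free point $\tilde{\bm{m}} = \bar{\bm{m}}$, (ii) verify the regularity conditions required by the stability theorem there, and (iii) invoke the theorem, exploiting the fact that the cost $\bm{Q}$ depends polynomially on $\tilde{\bm{m}}$ while the $\mbox{SO}(3)$ constraints are \emph{independent} of the measurements.

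For stage (i), I would set $\tilde{\bm{m}} = \bar{\bm{m}}$ so that each error term $\bm{e}_{ik}$ vanishes at the ground-truth lift $\bm{z}^*$, making the optimal cost exactly zero; the certificate matrix $\bm{H}^*$ of the dual~\eqref{opt:Dual} then satisfies $\bm{H}^* \succeq \bm{0}$ and $\bm{H}^*\bm{z}^* = \bm{0}$, so $\bm{z}^*$ lies in the nullspace of $\bm{H}^*$ and the primal–dual pair is optimal with zero gap. The crux is to show that this nullspace is \emph{one-dimensional}, i.e.\ that $\mbox{corank}(\bm{H}^*) = 1$, which forces the primal SDP solution to be rank-one. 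This is precisely where non-coplanarity enters: by the classical result of~\cite{arunLeastSquaresFittingTwo1987}, non-coplanar landmarks yield a \emph{unique} solution to the noise-free registration problem, ruling out the continuous symmetry (rotation about the normal of a coplanar landmark set) that would otherwise enlarge the nullspace of $\bm{H}^*$ and inflate the primal rank.

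For stage (ii), I would verify the non-degeneracy hypotheses at the noise-free solution: the Linear Independence Constraint Qualification (the constraint gradients of~\eqref{opt:Localize} are linearly independent at $\bm{z}^*$, which the paper already assumes to set up the SDP) and strict complementarity. Notably, strict complementarity is largely a consequence of stage (i): with $\mbox{rank}(\bm{Z}^*) = 1$ and $\mbox{rank}(\bm{H}^*) = n-1$ one has $\mbox{rank}(\bm{Z}^*) + \mbox{rank}(\bm{H}^*) = n$, so the complementary-slackness inequality from Section~\ref{sec:Relaxation} binds. The remaining primal non-degeneracy then follows from the $\mbox{SO}(3)$ constraint structure together with the uniqueness established above, mirroring the development in~\cite{wiseCertifiablyOptimalMonocular2020}. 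Stage (iii) then applies the stability theorem of~\cite{cifuentesLocalStabilitySemidefinite2022}: since the data of~\eqref{opt:Localize} vary smoothly with $\tilde{\bm{m}}$, the verified non-degenerate, tight solution at $\bar{\bm{m}}$ admits a radius $\epsilon > 0$ such that tightness and exact recovery persist whenever $\Vert\tilde{\bm{m}} - \bar{\bm{m}}\Vert^2 < \epsilon$.

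I expect the main obstacle to be stage (i)—specifically, rigorously translating the \emph{geometric} non-coplanarity condition into the \emph{algebraic} statement $\mbox{corank}(\bm{H}^*) = 1$ about the nullspace of the certificate matrix. Once this is in hand, strict complementarity and the remaining hypotheses are essentially bookkeeping against the conditions of the cited stability theorem, and the perturbation conclusion follows directly from its statement.
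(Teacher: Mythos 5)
Your overall architecture matches the paper's: reduce to a single pose by separability, establish the zero-noise base case, and invoke the SDP-stability theorem of~\cite{cifuentesLocalStabilitySemidefinite2022} to get a perturbation radius. However, you have correctly identified, and then left open, the one step that actually carries the proof: translating non-coplanarity into the algebraic non-degeneracy needed at the noise-free point. The route you sketch for it does not work as stated. Uniqueness of the noise-free registration solution (Arun et al.) does \emph{not} imply $\mbox{corank}(\bm{H}^*)=1$; uniqueness of a QCQP minimizer is compatible with a dual certificate of higher corank, and indeed this paper's whole point is that matrix-weighted instances with perfectly unique minimizers can fail to be tight. So stage (i) of your plan, which you flag as "the main obstacle," is a genuine gap rather than bookkeeping, and the strict-complementarity claim in stage (ii) is circular because it presupposes exactly the corank statement you have not established.

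The paper closes this gap by a different and more elementary mechanism: it verifies the hypotheses of the \emph{strict-convexity} form of the stability theorem (Theorem 3.9 of~\cite{cifuentesLocalStabilitySemidefinite2022}) rather than the certificate-based one. Writing the single-pose cost as $f_i(\bar{\bm{m}},\bm{y})=\bm{y}^T\bm{Q}\bm{y}+\bm{b}^T\bm{y}+c$ with $\bm{Q}=\bm{A}^T\bm{W}\bm{A}$ and $\bm{A}=\bm{B}\otimes\bm{I}_3$, where $\bm{B}$ stacks the homogeneous landmark coordinates $\begin{bmatrix}\bm{m}_0^{k0^T} & 1\end{bmatrix}$, non-coplanarity gives $\rank{\bm{B}}=4$, hence $\rank{\bm{A}}=12$, hence $\bm{Q}\succ\bm{0}$. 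The noise-free cost is therefore strictly convex with its unconstrained minimum (value zero) attained on the feasible set, and the desired corank-one property of the homogenized certificate follows for free (the certificate at zero noise is just the homogenized $\bm{Q}$, whose single-dimensional nullspace is forced by a Schur-complement argument). Non-coplanarity thus enters through the rank of the \emph{data} matrix $\bm{A}$, not through the nullspace of $\bm{H}^*$ directly, and no separate verification of strict complementarity is needed — only the Abadie constraint qualification, which holds for the $\mbox{SO}(3)$ feasible set by the argument in~\cite{cifuentesLocalStabilitySemidefinite2022}. If you want to salvage your certificate-based route, you would need to supply essentially this same positive-definiteness computation to pin down the nullspace of $\bm{H}^*$, at which point you have reproduced the paper's argument in a more roundabout way.
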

\begin{proof}
To prove the theorem, we show that the conditions of Theorem 3.9 from~\cite{cifuentesLocalStabilitySemidefinite2022} are satisfied when the assumptions of our theorem hold. In the following, we treat the measurements, $\tilde{\bm{m}}$, as a set of \emph{parameters} on which the cost of the problem depends. Since there are no relative-pose measurements in Problem \eqref{opt:Localize}, it is separable and we can prove the result for a single-pose problem (fixed pose $i$) without loss of generality.
The conditions of Theorem 3.9 are as follows:
\begin{enumerate}
	\item The cost and constraints are quadratic in the variables. \label{thm1:cond1}
	\item The cost depends continuously on the set of parameters, $\tilde{\bm{m}}$. \label{thm1:cond2}
	\item When constructed with $\bar{\bm{m}}$ rather than $\tilde{\bm{m}}$, the cost becomes \emph{strictly convex}, with the same unconstrained minimum as the original problem.\label{thm1:cond3}
	\item The Abadie Constraint Qualification (ACQ) holds for the feasible set at the global solution. \label{thm1:cond4}
\end{enumerate}
Let $\bm{y} = \begin{bmatrix}\vect{\bm{C}_i}^T & \bm{t}_i^T \end{bmatrix}^T \in\mathbb{R}^{12}$ be the vectorized form of the variables.
The cost function for pose $i$ in Problem \eqref{opt:Localize} is given by 
\begin{equation}\label{eqn:LocCost}
	f_i(\tilde{\bm{m}},\bm{y}) = \sum\limits_{(i,k)\in\EdgeSet_m} \bm{e}_{ik}^T \bm{W}_{ik} \bm{e}_{ik}.
\end{equation}
We can write $\bm{e}_{ik}$ as an affine function of $\bm{y}$
\begin{equation}
	\bm{e}_{ik} = \tilde{\bm{m}}_i^{ki} - \bm{A}_k \bm{y}, ~\forall ~(i,k)\in \EdgeSet_m,
\end{equation}
where $\bm{A}_k = \begin{bmatrix} \bm{m}_{0}^{k0^T}\otimes\bm{I}& -\bm{I} \end{bmatrix}\in\mathbb{R}^{3\times12}$. The cost function becomes 
\begin{equation}
	f_i(\tilde{\bm{m}},\bm{y}) = \bm{y}^T \bm{Q} \bm{y} + \bm{b}^T \bm{y} + c, 
\end{equation}
where
\begin{gather}
	\bm{Q} = \sum\limits_{(i,k)\in\EdgeSet_m} \bm{A}_k^T \bm{W}_{ik} \bm{A}_k, \\
	\bm{b} = -2\sum\limits_{(i,k)\in\EdgeSet_m} \bm{W}_{ik}\tilde{\bm{m}}_i^{ik}, \\
	c=\sum\limits_{(i,k)\in\EdgeSet_m} \tilde{\bm{m}}_i^{ik^T}\bm{W}_{ik}\tilde{\bm{m}}_i^{ik}.
\end{gather}
We see immediately that the cost function depends quadratically on the variables, $\bm{y}$, and quadratically (thus continuously) on the parameters, $\tilde{\bm{m}}$. Together with the fact that the $\mbox{O}(3)$ constraints are quadratic, conditions~\ref{thm1:cond1}) and~\ref{thm1:cond2}) are satisfied.

Now, let $\bar{\bm{y}}$ represent the vectorized form of the global solution and consider the cost function, $f_i(\bar{\bm{m}},\bm{y})$, constructed using the noise-free measurements, $\bar{\bm{m}}$, associated with the global solution. Since $f_i(\bar{\bm{m}},\bm{y})$ is a sum of quadratic forms, we have that $f_i(\bar{\bm{m}},\bm{y})\geq 0$. Moreover, its global minimizer is $\bar{\bm{y}} $ since $f_i(\bar{\bm{m}},\bar{\bm{y}})= 0$. 

To conclude that this minimizer is unique, we must show \emph{strict} convexity of $f_i(\bar{\bm{m}},\bm{y})$, which is implied if $\nabla_{\bm{y}}^2 f_i(\bar{\bm{m}},\bm{y}) = \bm{Q} $ is strictly positive definite~\cite{boydConvexOptimization2004}. Note that since the weights are positive definite ($\bm{W}_{ik}\succ 0 $) we already have that $\bm{Q} \succeq 0$. To show positive definiteness, it remains to show that $\bm{Q}$ is full-rank. To this end, note that we can write
\begin{equation}
	\bm{Q} = \bm{A}^T \bm{W} \bm{A},
\end{equation}
where
\begin{gather}
	\bm{W} = \diag{\bm{W}_{i1},\ldots,\bm{W}_{iN}},\\
	\bm{A} = \begin{bmatrix}
		 \bm{A}_1 \\ \vdots \\ \bm{A}_N
	\end{bmatrix} = \begin{bmatrix}
		 \bm{m}_{0}^{10^T} & 1 \\
		 \vdots& \vdots\\
		 \bm{m}_{0}^{N0^T} & 1
	\end{bmatrix}\otimes \bm{I}_3,
\end{gather}
and we have re-indexed the landmarks from $1$ to $N$ for convenience. We also define
\begin{equation*}
	\bm{B} = \begin{bmatrix}
		\bm{m}_{0}^{10^T} & 1 \\
		\vdots&\vdots\\
		\bm{m}_{0}^{N0^T} & 1
	\end{bmatrix}.
\end{equation*}
Since $\bm{W}$ is full rank, we are required to show that $\rank{\bm{A}} = 12$. 
Recall that the singular values  of a Kronecker product, $\bm{A}\otimes\bm{B}$, is given by $\left\{ \mu\lambda,~ \forall \mu \in \sigma(\bm{A}),~ \lambda \in \sigma(\bm{B})\right\}$, where $\sigma()$ denotes the set of singular values.
Therefore, $\rank{\bm{A}} = 12$ is equivalent to the fact that $\rank{\bm{B}}=4$. We can subtract the first row from the remaining rows of $\bm{B}$ without changing its row rank. Thus,
\begin{equation*}
	\rank{\bm{B}} = \rank{\begin{bmatrix}
			\bm{m}_{0}^{10^T} & 1 \\
			\bm{D} & \bm{0}
		\end{bmatrix}}, ~ \bm{D} = \begin{bmatrix}
		(\bm{m}_{0}^{20}-\bm{m}_{0}^{10})^T  \\
		\vdots\\
		(\bm{m}_{0}^{N0}-\bm{m}_{0}^{10})^T 
	\end{bmatrix}.\\	
\end{equation*}
Since the landmarks are not coplanar by assumption, $\rank{\bm{D}}=3$ and, since $\rank{\bm{B}} = \rank{\bm{D}} + 1$, condition~\ref{thm1:cond3} holds.
 
Finally, the ACQ holds at any feasible point by the argument given in~\cite{cifuentesLocalStabilitySemidefinite2022}, giving the final condition of Theorem 3.9.
\end{proof}

We conclude this section with two remarks. First, the assumption that the weight matrices are positive definite (i.e., not degenerate) is not strictly required as long as $\rank{\bm{A}^T \bm{W} \bm{A}}=12$ is guaranteed by sufficient measurements. As mentioned above, degenerate weight matrices were encountered in~\cite{brialesConvexGlobal3D2017} when measurements of lines and planes were considered. Second, the extension of this proof to include relative-pose measurements should be \rev{possible} since it has already been proved when \emph{only} relative-pose measurements are considered (at least for the case of a weakly connected measurement graph)~\cite{rosenSESyncCertifiablyCorrect2019}.

\subsection{Other Metrics of Tightness}\label{App:otherMetrics}

In other works, it is sometimes the case that a `relative gap' is used as a metric for evaluating tightness of a convex relaxation. This gap is typically defined as follows:
\begin{equation*}
	\mbox{gap} = \frac{p(\bm{x}_r)-d^*}{1+d^*},
\end{equation*}
where $p(\bm{x}_r)$ represents the primal cost of the closest feasible, rank-1 solution (rounded solution) and $d^*$ represents the optimal SDP cost (equivalently, the dual cost). The one in the denominator keeps the metric stable when $d^*$ is low. 

In Figure~\ref{fig:compare_er_gap}, we compare this metric with the ER, our metric of choice throughout the paper. We note that the corresponding boundaries are very close, except when anistropicity is large. When redundant constraints were used, both metrics showed that the relaxation was tight for all parameter values.

\begin{figure}[!t]
	\centering
	\includegraphics[width=\columnwidth]{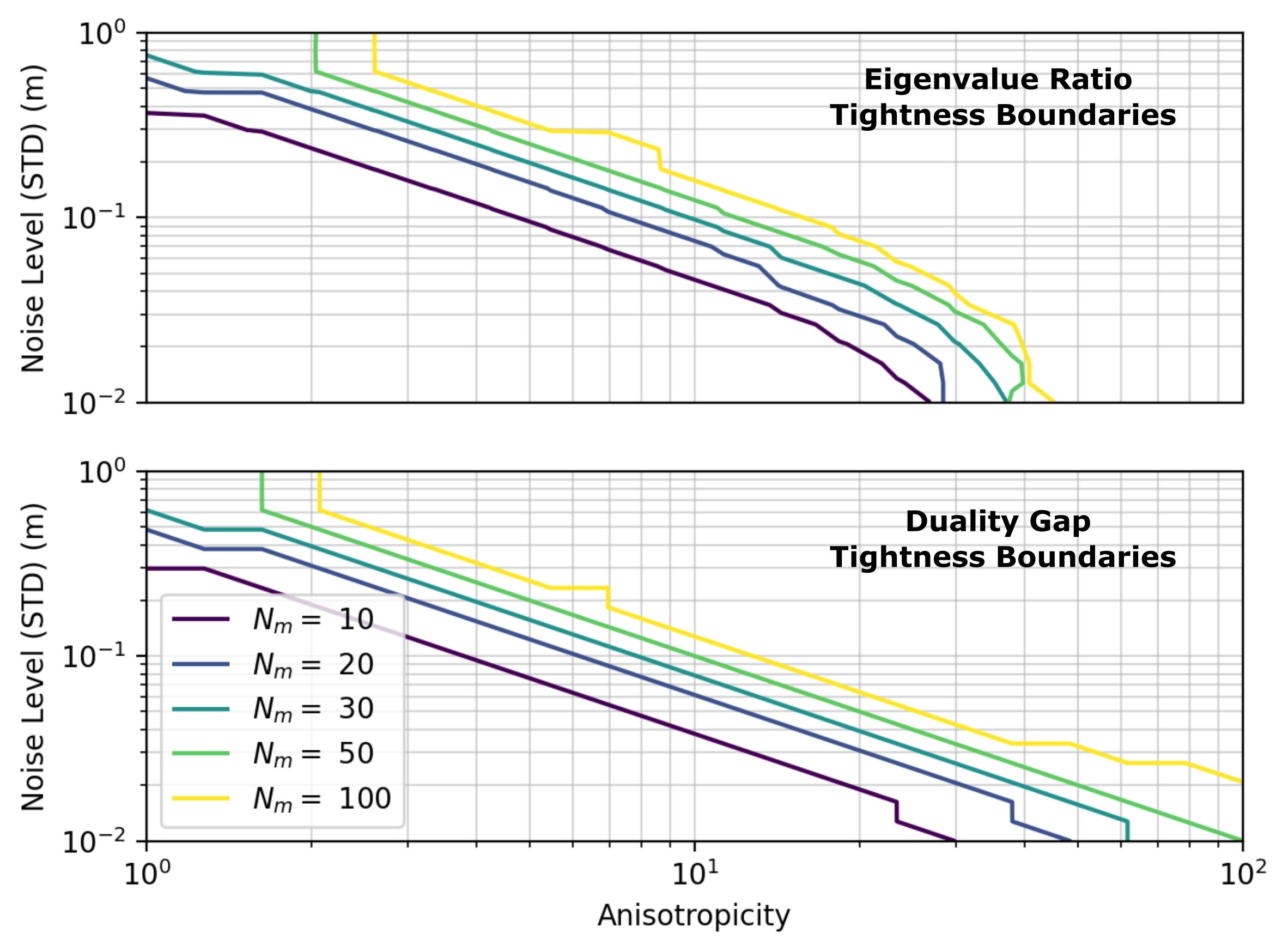}
	\caption{Comparison of ER and duality gap tightness boundary contours for the results in Figure~\ref{fig:ellipsoid_align} (without redundant constraints). The ER boundary corresponds to an ER of $1\times10^{-6}$ whereas the duality gap boundary corresponds to a relative gap of $1\times10^{-10}$.}
	\label{fig:compare_er_gap}
\end{figure}

\begin{figure}[!t]
	\centering
	\includegraphics[width=\columnwidth]{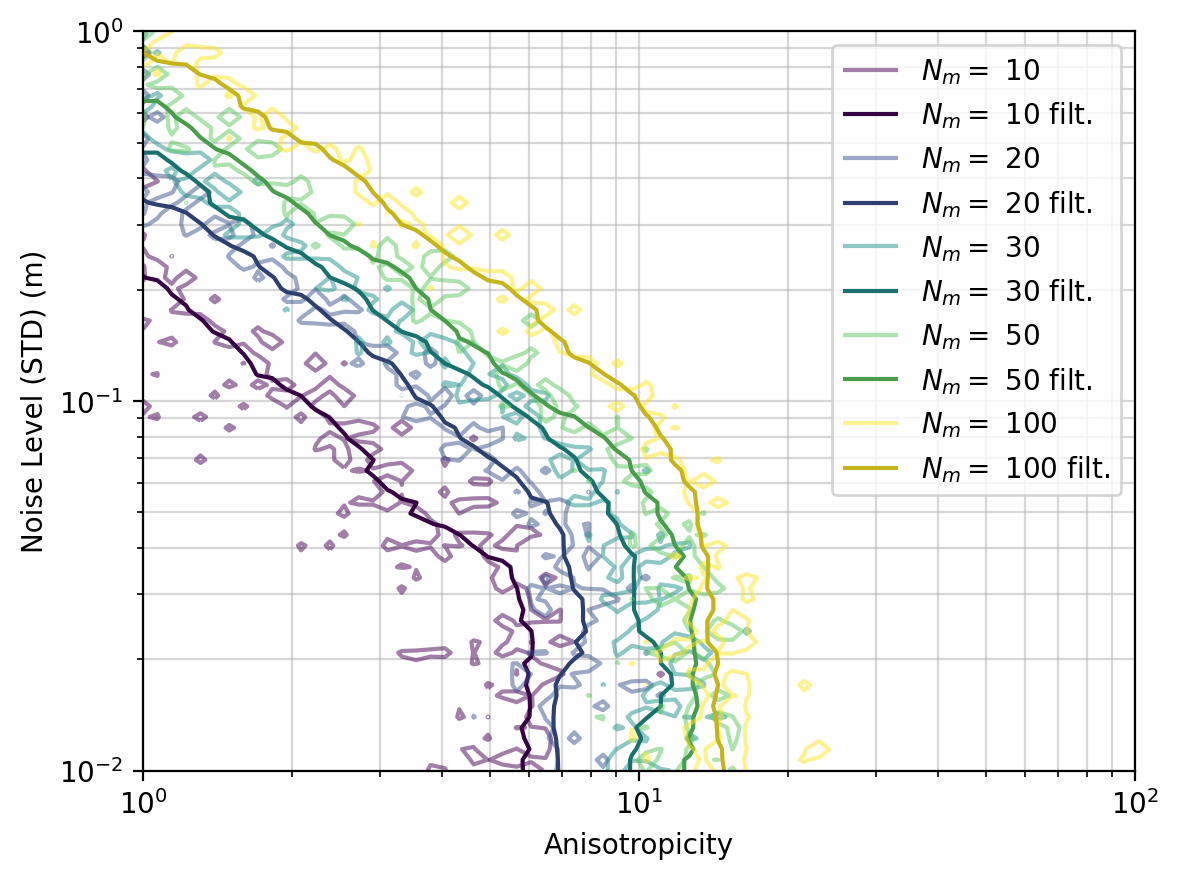}
	\caption{Comparison of tightness boundary contours with and without median filter for the results in Figure~\ref{fig:ellipsoid_align}. Darker contours (marked ``filt.'' in legend) represent the smoothed contours that were used in the body text.}
	\label{fig:filter}
\end{figure}

\subsection{Tightness Boundary Smoothing}\label{App:Smoothing}

As mentioned in the main body, we plot tightness boundaries based on the parameters for which the minimum ER across all trials passes a threshold. These contours were plotted using the `contour' function from PyPlot.  However, it was initially found that the resulting contour plots were quite noisy and difficult to interpret even when the number of trials was increased considerably. To filter the data, the median of minimum ER  values was taken over an $N$-by-$N$ block in the parameter space (centered on a given parameter value) and the result was used to generate the smoothed contours. It was found that $N=7$ was sufficient to ensure adequate smoothing. A comparison of the contours on the raw minimum ER  data versus the filtered data is shown in Figure~\ref{fig:filter}, in which the data corresponds to the data in Figure~\ref{fig:ellipsoid_align} in the main text. 

\begin{IEEEbiography}[{\includegraphics[width=1in,height=1.25in,clip,keepaspectratio]{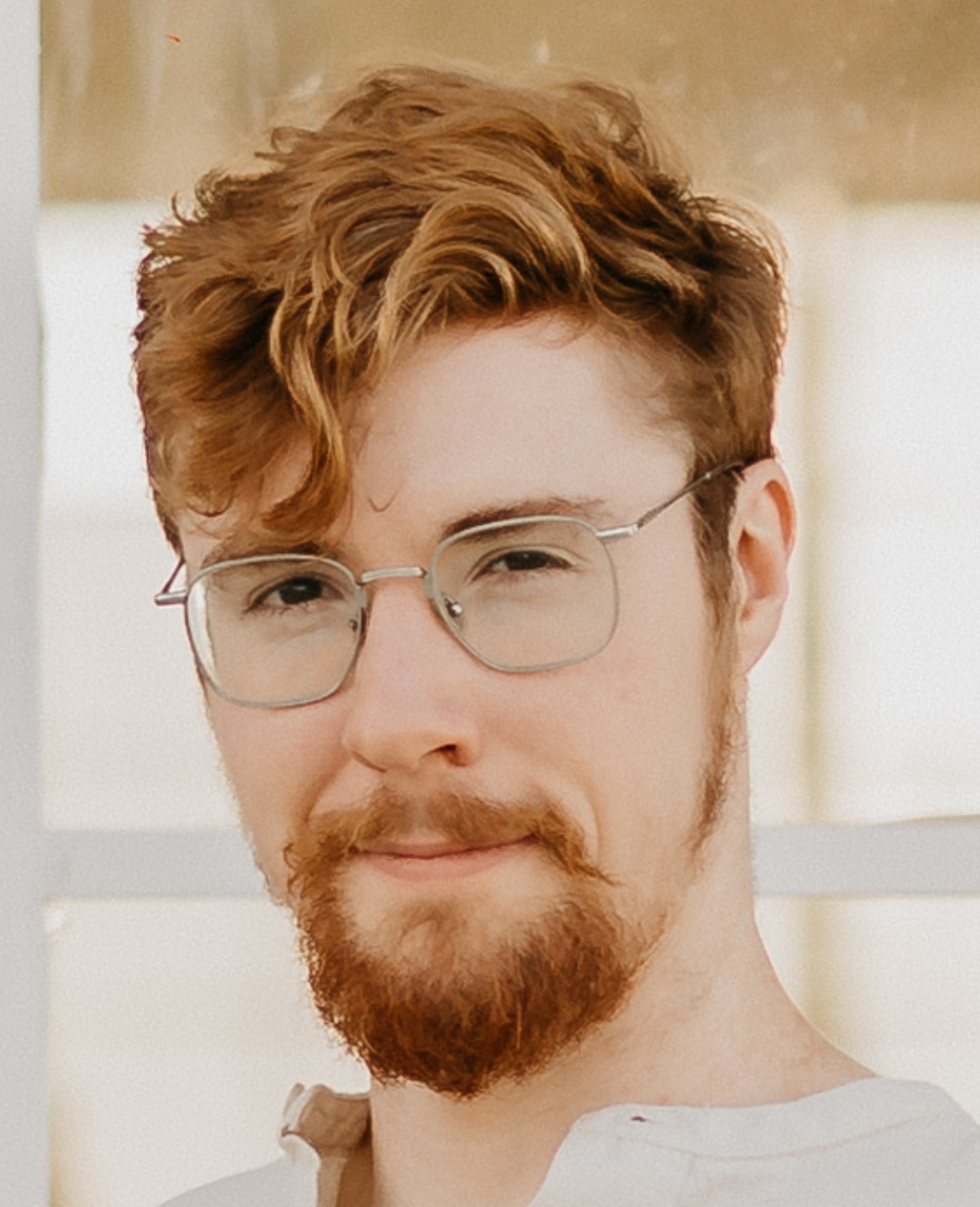}}]{Connor Holmes} received a B.A.Sc. degree in Engineering Science and M.A.Sc. degree in Electrical and Computer Engineering from the University of Toronto in 2014 and 2016, respectively. From 2016 until 2021, he worked as an Guidance Navigation and Controls Engineer at MDA Space. Since 2021, he has been pursuing a Ph.D. at the University of Toronto Robotics Institute. Connor's research interests include the application of convex optimization in robotics, particularly for certification of state-estimation algorithms.
\end{IEEEbiography}

\begin{IEEEbiography}[{\includegraphics[width=1in,height=1.25in,clip,keepaspectratio]{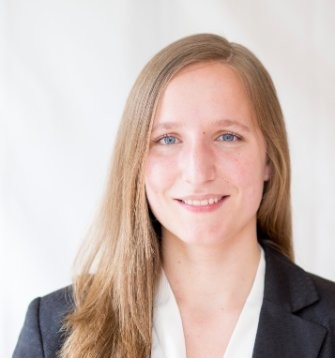}}]{Frederike Dümbgen} received the B.Sc. and M.Sc. degrees in Mechanical Engineering, with minor in Computational Science and Engineering, from École Polytechnique Fédérale de Lausanne (EPFL) in 2013 and 2016, respectively, conducting her Master’s thesis at ETH Zürich. She obtained her Ph.D. degree in computer and communication sciences from EPFL in 2021 and worked as a post-doctoral researcher at the Robotics Institute of University of Toronto, Canada, from 2022 to 2024. As of May 2024, she is a researcher in the WILLOW team, affiliated with Inria and ENS, PSL University, Paris. Her research interests lie in the areas of estimation and advanced optimization for robotics.   
\end{IEEEbiography}

\begin{IEEEbiography}[{\includegraphics[width=1in,height=1.25in,clip,keepaspectratio]{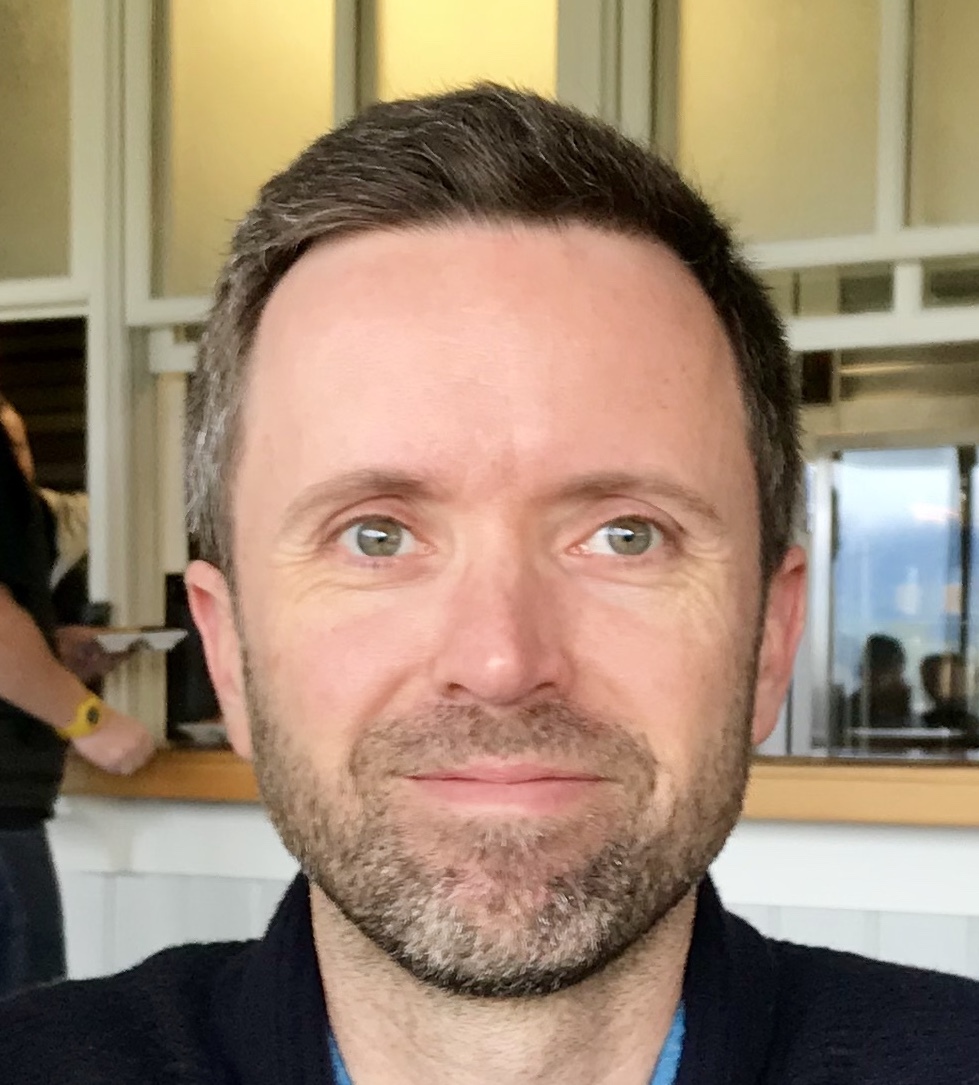}}]{Timothy D. Barfoot} received the B.A.Sc. degree in Engineering Science from University of Toronto, Toronto, ON, Canada, in 1997 and the Ph.D. degree in aerospace science and engineering from University of Toronto, in 2002. He is a Professor with the University of Toronto Robotics Institute, Toronto, ON, Canada. He works in the areas of guidance, navigation, and control of autonomous systems for a variety of applications. He is interested in developing methods to allow robotic systems to operate over long periods of time in large-scale, unstructured, three-dimensional environments, using rich onboard sensing (e.g., cameras and laser rangefinders) and computation.
\end{IEEEbiography}

\vfill

\end{document}